\newcommand{\ljs}[1]{{ \textcolor{red}{(LJS:  #1)}}{}}
\newcommand{\tildecal}[1]{\tilde{\mathcal{#1}}}
\theoremstyle{plain}
\newtheorem{theorem}{Theorem}
\newtheorem{lemma}[theorem]{Lemma}
\newtheorem{corollary}[theorem]{Corollary}
\newtheorem{claim}[theorem]{Claim}
\theoremstyle{definition}
\newtheorem{definition}[theorem]{Definition}
\newcommand{\One}{\mathbbm{1}}
\newenvironment{proofof}[1]{ {\noindent \em Proof of #1.}\/}{\hfill\qedsymbol\bigskip}
\newenvironment{proofsketch}{ {\noindent \em Proof sketch.}\/}{\hfill\qedsymbol\bigskip}
\newcommand{\remove}[1]{}
\newcommand{\suppress}[1]{}
\newcommand{\RR}{{\mathbb{R}}}
\newcommand{\argmin}{{\mathrm{argmin}}}
\newcommand{\eps}{\epsilon}
\DeclareMathOperator{\op}{op}
\def\Real{\mathbb{R}}
\def\Complex{\mathbb{C}}
\def\C{\Complex}
\def\R{\Real}
\DeclareMathOperator{\diag}{diag}
\def\abs#1{\mathopen| #1 \mathclose|}			
\def\Prod{\prod\limits}
\def\Set#1{\left\{ #1 \right\}}
\def\Abs#1{\left| #1 \right|}
\def\Norm#1{\left\| #1 \right\|}
\def\Paren#1{\left( #1 \right)}		
\def\Bigbar#1{\mathrel{\left|\vphantom{#1}\right.\n@space}}
\def\Setbar#1#2{\Set{#1 \Bigbar{#1 #2} #2}}
\def\tpose{\mathsf{T}}
\def\eps{\varepsilon}
\DeclareMathOperator{\supp}{supp}
\DeclareMathOperator{\Binomial}{Binomial}
\DeclareMathOperator{\Pas}{Pas}
\DeclareMathOperator{\Frob}{F}
\DeclareMathOperator{\poly}{poly}
\DeclareMathOperator{\Proj}{Proj}
\newcommand*\diff{\mathop{}\!\mathrm{d}}
\DeclareMathOperator{\cond}{cond}
\DeclareMathOperator{\roots}{roots}
\newcommand{\be}{\begin{equation}}
\newcommand{\ee}{\end{equation}}
\newcommand{\bea}{\begin{eqnarray}}
\newcommand{\eea}{\end{eqnarray}}
\newcommand{\bean}{\begin{eqnarray*}}
\newcommand{\eean}{\end{eqnarray*}}
\begin{document}

\title{The Sparse Hausdorff Moment Problem,
\\ with Application to Topic Models}

\author{
Spencer L. Gordon\thanks{Engineering and Applied Science, California Institute of Technology, {\tt slgordon@caltech.edu}.} \and
Bijan Mazaheri\thanks{Engineering and Applied Science, California Institute of Technology, {\tt bmazaher@caltech.edu}} \and
Yuval Rabani\thanks{The Rachel and Selim Benin School of Computer Science and Engineering, The Hebrew University of Jerusalem, Jerusalem 9190416, Israel, {\tt yrabani@cs.huji.ac.il}. Research supported in part by NSFC-ISF grant 2553-17 and by NSF-BSF grant 2018687. Part of this work was done while visiting Caltech.} \and
Leonard J. Schulman\thanks{Engineering and Applied Science, California Institute of Technology, {\tt schulman@caltech.edu}. Research supported in part by NSF grants CCF-1618795, 1909972.}
}

\maketitle

\begin{abstract}
We consider the problem of identifying, from its first $m$ noisy
moments, a probability distribution on $[0,1]$ of support $k<\infty$.
This is equivalent to the problem of learning a distribution on $m$
observable binary random variables $X_1,X_2,\dots,X_m$ that are iid
conditional on a hidden random variable $U$ taking values in
$\{1,2,\dots,k\}$. Our focus is on accomplishing this with $m=2k$,
which is the minimum $m$ for which verifying that the source is a
$k$-mixture is possible (even with exact statistics). This problem, so
simply stated, is quite useful: e.g., by a known reduction, any algorithm
for it lifts to an algorithm for learning pure topic models.

We give an algorithm for identifying a $k$-mixture using samples of 
$m=2k$ iid binary random variables using a sample of size 
$\left(1/w_{\min}\right)^2 \cdot\left(1/\zeta\right)^{O(k)}$
and post-sampling runtime of only $O(k^{2+o(1)})$ arithmetic operations.
Here $w_{\min}$ is the minimum probability of an outcome of $U$, and 
$\zeta$ is the minimum separation between the distinct success probabilities
of the $X_i$s. Stated in terms of the moment problem, it suffices to know 
the moments to additive accuracy $w_{\min}\cdot\zeta^{O(k)}$. It is known that 
the sample complexity of any solution to the identification problem must be 
at least exponential in $k$. Previous results demonstrated either worse
sample complexity and worse $O(k^c)$ runtime for some $c$ substantially 
larger than $2$, or similar sample complexity and much worse $k^{O(k^2)}$ 
runtime.

\suppress{ 

We consider the symmetric case where given the 
value of $U$, the observable variables are iid. In other words, the model is a mixture of $k$ distributions on $m$ iid binary random variables. The learning problem becomes easy for $m$ very large, so our focus is on $m=2k$ which is the
minimum $m$ for which verifying that the source is a $k$-mixture is possible (even
with exact statistics). This problem, so simply stated, is quite useful: by a known reduction, any algorithm for it lifts to an algorithm for learning topic models.
The literature on such problems distinguishes between the
learning problem and the 
identification problem. The former only
requires computing a model that mimics the observed statistics, whereas the latter
also requires the computed model to be close, in parameter space, to the true model. 
Our results apply to 
the greater challenge of identification (and this carries over to the topic model application).

In past work on this and also the more general ``asymmetric'' problem ($X_i$ independent conditional on $U$ but not necessarily iid), a barrier
at $m^{O(k^2)}$ on the sample complexity and/or runtime of the algorithm was
reached. We improve this substantially. Our identification algorithm
uses a sample of size $m^{O(k)}$. (It is known that the sample complexity of any solution to the 
identification problem must be at least $\exp(k)$, and we conjecture that our bound of $\exp(k\log k)$
is asymptotically optimal.) Our runtime, aside from the linear time single pass to collect the sample,
requires $O(k^{2+o(1)})$ arithmetic operations. Our sample size matches that of the best previously
known result, which however required runtime $m^{O(k^2)}$; and improves upon both the
sample size of $m^{O(k^2)}$ and runtime of at least $O(k^{4.5})$ of the only 
previous result that had post-sample runtime $\ll m^{O(k^2)}$. 
(The known algorithms for the more difficult asymmetric case require both sample complexity and runtime 
of $m^{O(k^2)}$, and solve only the learning problem.)
Our algorithm employs only relatively simple computations, that have efficient implementations. 
We have posted the code and some simulation results are included.
}
\end{abstract}

\thispagestyle{empty}
\newpage
\setcounter{page}{1}

\section{Introduction}\label{sec: intro}

\paragraph{Motivation.}
The Hausdorff moment problem is that of determining what moment sequences
\be \mu_i \coloneqq \int_0^1 \alpha^i \diff\mathcal{P}(\alpha) \quad \quad \quad (i \geq 0)
\label{moments-hausdorff} \ee
are possible for a probability distribution $\mathcal{P}$ supported on $[0,1]$. For background on this classical topic, see~\cite{Hildebrand74,Simon15}. 
Associated with this is the computational problem of determining $\mathcal{P}$ from $(\mu_i)_i$ (or approximating it from finite, and possibly noisy, prefixes). 

In this paper we are concerned with the sparse version of the computational problem, namely, the task of computing $\mathcal{P}$ in case it is assumed to have support of cardinality at most $k<\infty$. This problem has the following equivalent interpretation, due to which we call it the ``$k$-coin problem:''
identify the parameters of a distribution on $m$ observable binary random 
variables $X_1,X_2,\dots,X_m$ that are iid conditional on a 
hidden random variable $U$ taking values in $\{1,2,\dots,k\}$. 
Due to the symmetry among the variables (coins), the information available is precisely empirical estimates of moments $0,\ldots,m$ of the (shared) distribution of the variables $X_i$. (Of course $\mu_0=1$ so there are $m$ nontrivial statistics.)

Our focus is on accomplishing this with $m=2k$, which is the
minimum $m$ for which verifying that the source is a $k$-mixture is possible (even
with exact statistics). We freely go back and forth in this paper between the formulation in terms of mixture models and the formulation in terms of the moment problem. 

The problem of reconstructing $\mathcal{P}$, so simply stated, is quite useful: 

(i) By a known reduction, any algorithm for this problem lifts to an algorithm for learning topic models. This will be discussed in Sec.~\ref{sec: topic}.

(ii) This problem is a special case of the problem of identifying mixture models of $k$ product distributions on binary variables, a problem on which there has been an impressive sequence of contributions in the last two decades, as we will discuss below. The best runtime for the product-distributions problem is however $m^{O(k^2)}$. It seems likely that the true complexity of the general product case may be $m^{O(k)}$. Our results for the iid case may indicate a new direction toward resolving the conjecture for the general case.

(iii) Algorithms for identifying mixtures of product distributions are the simplest case of the yet-more-general problem of identifying distributions on ``structural causal models''~\cite{Pea09}. There has been little work in this direction,~\cite{AHK12} being a notable exception. However, even that work has to make strong assumptions about the distributions of the variables $X_i$, and in particular they cannot be binary. (Except for the case $k=2$, but we are concerned here with complexity of the problem as a function of $k$.) Source identification in causal graphical models is an important direction for future research, and the dependence of the sample size and runtime complexity on $k$ (a measure of how much ``confounding'' there is in the model) will matter a great deal.

\paragraph{The method.}
The algorithm we analyze is essentially that of Prony, 1795~\cite{Prony1795}. The idea is to (a) characterize the coin biases (the support of $\mathcal{P}$) as the roots of a polynomial whose coefficient vector is the kernel of the Hankel matrix; (b) use polynomial root-finding to determine the empirical coin biases; (c) reconstruct the mixture weights by polynomial interpolation. 

\paragraph{Prior work on the sparse Hausdorff, i.e., $k$-coin mixture, problem.}
It has long been acknowledged in the numerical analysis literature (e.g.,~\cite{Hildebrand74} 
\S 9.4,~\cite{KumaresanTS84}) that the Prony method is sensitive to sample error (i.e., to errors 
in the moments). In fact, the instability is not limited to the Prony method; a lower bound  
(\cite{RSS14} Thm~6.1) for the \textit{problem} (source identification of $k$-coin mixture models) 
shows that even for any $c<\infty$, if $ck$ (rather than just the minimum $2k$) noisy moments are 
available, it remains the case that accurate source identification is possible only if those moments 
are available to accuracy $\exp(-k)$, i.e., the sample size must be $\exp(k)$. The prior results on 
upper bounds were these: (i)~\cite{RSS14} (within a paper devoted mostly to topic models) 
re-invented the Prony method unawares, and solved the problem using sample complexity 
$s = \max\{(1/\zeta)^{O(k)},k^{O(k^2)}\}$ (or moment accuracy $\min\{\zeta^{O(k)},(1/k)^{O(k^2)}\}$) 
and post-sampling runtime $\poly(k)$. (ii) A quite different algorithm in~\cite{LRSS15} improved the 
sample complexity to $k^{O(k)}$ (or the equivalent moment accuracy), but required post-sampling runtime 
$k^{O(k^2)}$.\footnote{The bounds in~\cite{RSS14,LRSS15} do not depend on $w_{\min}$ because accuracy of the output is measured in transportation norm, which is insensitive to small mixture weights.}
(iii) Motivated by a problem in population genetics that reduces to the
$k$-coin mixture problem,~\cite{KKMMR18} analyzed a solution using the Matrix Pencil Method,
which requires sample complexity $\left(1/w_{\min}\right)^4 \cdot\left(1/\zeta\right)^{O(k)}$. They don't discuss 
explicitly the post-sampling runtime complexity. The method requires solving a generalized eigenvalue problem, 
which is solved in practice using algorithms that run in time $O(k^3)$.\footnote{It is possible that the runtime
can be improved to the time it takes to multiply two $k\times k$ matrices. This is still much worse than
$O(k^2)$, and the best guarantees hide impractical constants.}

In this paper we simultaneously achieve sample complexity $\left(1/w_{\min}\right)^2 \cdot\left(1/\zeta\right)^{O(k)}$ 
and post-sampling runtime $O(k^{2+o(1)})$.

\paragraph{Our result.}
Our main result is that using the Prony method, source identification can be performed for $k$-coin 
mixtures with \textbf{sample complexity \boldmath $\left(1/w_{\min}\right)^2 \cdot\left(1/\zeta\right)^{O(k)}$} 
(equivalently, the required moment accuracy is $w_{\min}^2 \cdot\zeta^{O(k)}$), and 
\textbf{runtime \boldmath $k^{2+o(1)}$.} We have posted a working implementation of the algorithm on 
the following public Jupyter Notebook:  \href{https://colab.research.google.com/drive/1qR6VOYSjq08LPxqHhyY0ap_VL1apt9yS?usp=sharing}{\color{purple}{Online notebook implementation}}.\footnote{\url{https://colab.research.google.com/drive/1qR6VOYSjq08LPxqHhyY0ap_VL1apt9yS?usp=sharing}} (Tested in Chrome and Safari.) 
 
The dependence on a separation parameter between the coin biases is necessary since the mixture weights 
are ill-defined in the limit of coinciding coin biases. For the reader interested in the key technical novelties of 
the paper we might point to the quantitative characterization of the pseudo-kernel of a Hankel matrix in 
Lemma~\ref{lm: eigenvalue} and the sequence of error-control lemmas in Section~\ref{sec: analysis} and 
particularly to Lemma~\ref{lm: root stability} which shows why a pseudo-kernel-vector of the empirical Hankel 
matrix will, as a polynomial, have roots close to those of the kernel of the model Hankel matrix.

This result also implies an improvement in identifying pure topic models, via the reductions in~\cite{RSS14,LRSS15}. 
These reductions require solving $k$ binary instances, and the required accuracy of the solution implies a post-reduction 
sample complexity of at least $\exp(k^2\log k)$ in both papers. Our results improve the post-reduction sample complexity 
to $\exp(k\log k)$ and the post-reduction runtime to $O(k^{3+o(1)})$. A more detailed comparison with previous work on 
topic models is given in Section~\ref{sec: topic}.

\paragraph{Related work.}
The $k$-coin problem becomes easier when $m$ is superlinear in $k$, and trivial when 
$m$ is $\Omega(k^2\log k)$. 
Therefore, we focus on the smallest $m$ for which the problem is solvable, which 
is $m=2k-1$ if $k$ is assumed, or $m=2k$ if $k$ needs to be verified. As noted
previously, three prior papers gave algorithms with worse performance than ours.
Roughly stating the results (ignoring dependence on $\zeta$ and on $w_{\min}$),
they are as follows. The paper~\cite{RSS14} solved the problem
in sample complexity $s = k^{O(k^2)}$ and post-sampling runtime of
$\poly(k)$. By {\em post-sampling runtime} we mean the time complexity
of the algorithm after the frequencies $h_j, 0 \leq j \leq m$ (= frequency that $j$ of 
the conditionally-iid coins come up ``heads'') have been
collected. That paper also proves a lower bound of $\exp(k)$ on the sample
complexity needed to solve the problem. Subsequently, a different solution
using near optimal sample complexity $s = k^{O(k)}$, but much worse
post-sampling runtime of $k^{O(k^2)}$, was given in~\cite{LRSS15}.
More recently, an algorithm achieving sample complexity $s = k^{O(k)}$
and post-sampling runtime of $\poly(k)$ was analyzed in~\cite{KKMMR18}.
We  note that all of these papers use $m=2k-1$, and hence do not deal with
verifying that the source is a $k$-coin distribution.
\suppress{
Thus, prior to this work, either the sample complexity or the post-sampling
runtime were $m^{\Omega(k^2)}$. Our algorithm guarantees near optimal 
sample complexity, while at the same time it improves substantially the best
previous post-sampling runtime. Moreover, our algorithm is simpler and
practically appealing in comparison with these previous algorithms.
}

In~\cite{RSS14,LRSS15}, the $k$-coin problem arises as the output 
of a reduction from the problem of identifying topic models, introduced 
in~\cite{Hof99,PRTV00}. A (pure) $k$-topic model is simply analogous
to the $k$-coin problem with highly multi-sided coins. There has been ample 
work on learning pure and mixed topic models, under various restrictive 
assumptions on the model, and 
also without restrictions~\cite{AGM12,AFHKL12,RSS14,LRSS15}.
The reductions of~\cite{RSS14,LRSS15} can be used in conjunction with
our algorithm to reduce the sample complexity and post-sampling runtime
required to solve the topic model problem. This is discussed in Section~\ref{sec: topic}.

In~\cite{KKMMR18}, the $k$-coin problem arises as output of a reduction
from the problem of inferring population histories (see the references therein).
Our results improve both the sample size and the post-sampling runtime 
complexity of the solution. We do note
that the $k$-coin algorithm in~\cite{KKMMR18} could have been used in
conjunction with the reductions in~\cite{RSS14,LRSS15} to solve the topic
model problem. The bounds derived this way would be worse than the
bounds we prove in this paper.

We also mention some generalizations of the $k$-coin problem that were
considered in the literature. 
\suppress{
The question of inverting the moment map
of an arbitrary distribution on $[0,1]$ was considered in~\cite{LRSS15}
in the context of identifying mixed topic models. In that case, the accuracy
of the inversion depends on $m$, even given precise statistics. Another way
to view the $k$-coin problem is as a mixture of $k$ power distributions on
$\{0,1\}^m$. Thus, a natural generalization is to explore }
Most obvious is 
mixtures of $k$ product
distributions on $\{0,1\}^m$. That is, the formulation is the same as ours except that $X_1,\ldots,X_m$ are merely required to be independent, but not necessarily iid, conditional on the hidden variable $U$.
This problem has been the focus 
of considerable research in the past two 
decades~\cite{KMRRSS94,FM99,CGG01,CR08,FOS08,CM19}. 
Clearly, in this case a larger $m$ is no longer purely helpful, since the number of degrees of freedom of the problem also goes up with $m$. 
It should be noted, though, that the strongest results in this sequence, \cite{FOS08} and \cite{CM19}, do not address the problem of \textit{identifying} the source model; rather, they \textit{learn} a model which generates similar statistics. On the positive side, this task can sometimes be performed even under conditions where there is not enough information in the statistics for identification (i.e., when there are models with near-enough statistics that are far apart in, say, transportation distance); but on the negative side, since these algorithms (as well as the algorithm in~\cite{LRSS15}) are forced to perform 
 an exhaustive enumeration over a large grid of potential models, their computational efficiency does not much improve even when the statistics are known to sufficiently-good accuracy that only a very small-diameter (in transportation distance) set of models could generate them.

The distinction between the ``identification'' and ``learning''
goals was made already in~\cite{FM99}, who solved the identification 
problem for mixtures of $k=2$ product distributions on $\{0,1\}^m$. Similar 
results for somewhat more general models were achieved at a similar time 
in~\cite{CGG01}. The best result to date~\cite{CM19} learns in time 
$k^{k^3}\cdot m^{O(k^2)}$, improving upon a previous result~\cite{FOS08} 
of $m^{O(k^3)}$. The same paper~\cite{CM19} shows a lower bound of 
$m^{\Omega(\sqrt{k})}$ on the sample complexity of the task.

Beyond mixtures of product distributions, an even more complex but important class of source identification problems arises when the hidden variable (our ``$U$'') may be just one of several such variables, and when a known directed causal structure exists among the observed variables (the ``$X_i$''). This is a very broad field of investigation and we point only to~\cite{Pea09,PetersJS17} for background, and to~\cite{AHK12} for an example of how (with some additional assumptions on the distributions of the $X_i$) certain models can be handled.

\suppress{
We note that the $m^{O(k^2)}$ runtime factors in~\cite{CM19} (and also
in~\cite{LRSS15}) emanate from a complete enumeration on a discretization
of the parameter space to find a model whose moments best approximate
the empirical moments. Our results provide a direct and far more efficient
inversion of the moment map.
}

\section{Mixture Models and other Definitions}\label{sec: mix}

\begin{definition}[The $k$-coin model]
A $k$-coin model $\mathcal{M} = (\alpha, w)$ is a mixture of $k$ Bernoulli variables with success probabilities $\alpha_1,\dots,\alpha_k$ with non-negative mixing weights $w_1,\dotsc,w_k$, respectively. 	
\end{definition}

\begin{definition}[$m$-snapshots of a $k$-coin model] 
Given a $k$-coin model 
$\mathcal{M} = (\alpha, w)$, an {\em $m$-snapshot} is a sample from the 
mixture of binomial distributions $w_1\Binomial(m,\alpha_1)+\dotsc+w_k\Binomial(m,\alpha_k)$. (The binomial is a sufficient statistic for $m$ rv's $X_1,\ldots,X_m$ because they are iid given the selected coin.)
\end{definition}

For a $k$-coin model, the moments defined in equation \eqref{moments-hausdorff} can be written as follows where $\delta_\alpha$ being the Dirac measure at $\alpha$,
\[ \mathcal{P} = w_1\delta_{\alpha_1} + \dotsb + w_k\delta_{\alpha_k}, \quad\mu_i = \sum_{j=1}^k \alpha_j^i w_j. \]

\begin{definition}[Separation for polynomials and mixtures] 
For a $k$-coin probability model $\mathcal{M} = (\alpha, w)$, define the separation by $\zeta(\mathcal{M}) = \min_{i\neq j} \Abs{\alpha_i - \alpha_j}$. 
 For a degree $k$ polynomial with roots $\beta_1,\dotsc,\beta_k \in \C$, define the root separation by $\min_{i\neq j} \Abs{\beta_i - \beta_j}$. 
\end{definition}

\begin{definition}
The rectangular Vandermonde matrix $V^{(m)}_{\alpha} \in \R^{(m+1)\times k}$ associated with a vector $\alpha \in \C^k$ is given by 
\[ V_{\alpha}^{(m)} = \begin{bmatrix}
 1 & 1 & 1 & \dotsm & 1\\
 \alpha_1 & \alpha_2 & \alpha_3 & \dotsm & \alpha_k\\
 \alpha_1^2 & \alpha_2^2 & \alpha_3^2 & \dotsm & \alpha_k^2\\
 \vdots & \vdots & \vdots & \ddots & \vdots\\
 \alpha_1^{m} & \alpha_2^{m} & \alpha_3^{m} & \dotsm & \alpha_{k}^{m}
 \end{bmatrix} \] We'll denote the square Vandermonde matrix with $V_{\alpha} \coloneqq V_{\alpha}^{(k-1)}$. 
\end{definition}

\begin{definition}[Hankel Matrix] The $(k+1)\times (k+1)$ 
Hankel matrix $\mathcal{H}_{k+1} = \mathcal{H}_{k+1}(\mathcal{P})$ is defined as:
\begin{equation}\label{eq: Hankel def}
\begin{array}{l}
\mathcal{H}_{k+1} = \begin{bmatrix}
                           \mu_0 & \mu_1 & \mu_2 & \cdots & \mu_k \\
                           \mu_1 & \mu_2 & \mu_3 & \cdots & \mu_{k+1} \\
                           \vdots & \vdots & \vdots &             & \vdots \\
                           \mu_k & \mu_{k+1} & \mu_{k+2} & \cdots & \mu_{2k}
                           \end{bmatrix}.
\end{array}
\end{equation}
\end{definition}

Note that if $\mathcal{P}$ is supported on a 
set of cardinality $k$ (a.k.a.\ a $k$-coin
distribution), then
\begin{equation}\label{eq: Hankel}
\mathcal{H}_{k+1} = \sum_{j=1}^k w_j \alpha_j \alpha_j^\tpose = V_\alpha^{(k)} \diag(w_1,\ldots,w_k) {V_\alpha^{(k)}}\tpose
\end{equation}
where $\alpha_j^\tpose = (1, \alpha_j, \alpha_j^2, \alpha_j^3, \dots, \alpha_j^k)$. This also shows that the Hankel matrix is positive semi-definite. 
\begin{definition}[Polynomial associated with a vector]
We associate to each vector $q\in \R^{k}$ a degree $k-1$ polynomial 
$\hat{q}(x) = \sum_{j=0}^{k-1} q_jx^j$. (For this reason we use zero indexing for the vector.)
\end{definition}

\begin{definition}
For a matrix $M$, let $\Norm{M}_2$ denote the $2\to 2$ operator norm of $M$. Thus, $\Norm{M}_2 = \sigma_{\max}(M)$, the largest singular value of $M$. 
\end{definition}

\begin{definition}
For a Hermitian matrix $M$, let $\lambda_i(M)$ denote the $i$th smallest eigenvalue of $M$.
In particular $\lambda_1(M)$ is the smallest eigenvalue of $M$.
\end{definition}

\begin{definition}[Euclidean projection onto a closed convex set]
For a closed convex set $S \subseteq \R^k$ and any point $x \notin S$, the Euclidean projection of $x$ onto $S$ is $ \Proj_S(x) \coloneqq \argmin_{y\in S} \Norm{y-x}_2. $ This projection is unique. 
\end{definition}

\section{Properties of Hankel Matrices}\label{sec: prelim}
We begin with some properties of Hankel matrices corresponding to finitely supported distributions that follow from results in Chihara \cite{chihara78}. (See Schmudgen \cite[Ch. 10]{schmudgen2017moment} for a complete characterization.) For completeness, a proof is provided in the appendix.\begin{lemma}\label{lm: kernel characterization}
Let ${\cal P}$ be a probability measure on $[0,1]$. Then,
\begin{enumerate}
\item $\mathcal{P}$ is supported on a set of cardinality at most $k$ iff $\mathcal{H}_{k+1}$ is singular. 
\item If the support of $\mathcal{P}$ is a set $\Set{\alpha_1,\dotsc,\alpha_k} \subset [0,1]$ then the kernel of $\mathcal{H}_{k+1}$ is spanned by the vector $q\in \R^{k+1}$ where $\hat{q}(z) = \prod_{i=1}^k (z-\alpha_i)$ is the unique monic polynomial with roots at the support of $\mathcal{P}$.
\end{enumerate} \emph{Proof in Appendix A.}
\end{lemma}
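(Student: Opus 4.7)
The plan is to exploit the single identity
\[
q^{\tpose} \mathcal{H}_{k+1}\, q \;=\; \int_0^1 \hat{q}(x)^2 \diff\mathcal{P}(x),
\]
which follows by expanding the quadratic form and using the definition $\mu_i = \int x^i \diff\mathcal{P}$. Combined with the fact that $\mathcal{H}_{k+1}$ is positive semi-definite (observed after equation \eqref{eq: Hankel}, since $\mathcal{H}_{k+1}$ is a nonnegative combination of rank-one PSD matrices $\alpha_j\alpha_j^{\tpose}$, and by a standard density/approximation argument this extends to general $\mathcal{P}$ on $[0,1]$), we get that $q \in \ker(\mathcal{H}_{k+1})$ if and only if $\hat{q}$ vanishes $\mathcal{P}$-almost everywhere, i.e., $\hat{q}(\alpha) = 0$ at every point $\alpha \in \supp(\mathcal{P})$. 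This equivalence is the heart of both claims.

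For part 1, I would argue both directions from this equivalence. If $\abs{\supp(\mathcal{P})} > k$, then any nonzero $q \in \R^{k+1}$ gives a polynomial $\hat{q}$ of degree at most $k$, which can vanish at no more than $k$ of the support points; hence $\int \hat{q}^2 \diff\mathcal{P} > 0$ and so $\mathcal{H}_{k+1}$ is nonsingular. Conversely, if $\supp(\mathcal{P}) \subseteq \{\alpha_1,\dotsc,\alpha_\ell\}$ with $\ell \leq k$, take $q$ to be the coefficient vector of the monic polynomial $\prod_{i=1}^{\ell}(x - \alpha_i) \cdot x^{k-\ell}$ (padded with extra factors of $x$ if $\ell < k$); this is a nonzero vector in $\R^{k+1}$ satisfying $\hat{q}(\alpha_i) = 0$ for each support point, so $q \in \ker(\mathcal{H}_{k+1})$ and the matrix is singular.

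For part 2, assume $\supp(\mathcal{P}) = \{\alpha_1,\dotsc,\alpha_k\}$. The equivalence above shows $q \in \ker(\mathcal{H}_{k+1})$ iff $\hat{q}$ has roots at all $k$ points $\alpha_1,\dotsc,\alpha_k$. Since $\hat{q}$ has degree at most $k$, this forces $\hat{q}(x) = c\prod_{i=1}^k (x - \alpha_i)$ for some scalar $c \in \R$. Thus the kernel is exactly the one-dimensional span of the coefficient vector of $\prod_{i=1}^k (x - \alpha_i)$, as claimed.

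I do not anticipate a significant obstacle: the whole argument hinges on recognizing the quadratic form as $\int \hat{q}^2 \diff\mathcal{P}$ and invoking PSD-ness. The one point requiring a bit of care is verifying positive semi-definiteness for a general $\mathcal{P}$ on $[0,1]$ (not only finitely supported); this follows either by weak-$\ast$ approximation of $\mathcal{P}$ by finitely supported measures, or directly by noting that $q^{\tpose} \mathcal{H}_{k+1} q$ is an integral of a nonnegative integrand.
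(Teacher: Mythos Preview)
Your proposal is correct and follows essentially the same approach as the paper: both hinge on the identity $q^{\tpose}\mathcal{H}_{k+1}q = \int_0^1 \hat{q}(\alpha)^2\,\diff\mathcal{P}(\alpha)$ together with positive semi-definiteness to reduce kernel membership to $\hat{q}$ vanishing on $\supp(\mathcal{P})$. The only cosmetic difference is in the ``support $\le k \Rightarrow$ singular'' direction, where the paper invokes the Vandermonde factorization~\eqref{eq: Hankel} for a rank bound while you exhibit an explicit kernel vector; also note that your detour through weak-$\ast$ approximation for PSD-ness is unnecessary, since (as you yourself observe at the end) the integral identity already gives $q^{\tpose}\mathcal{H}_{k+1}q \ge 0$ directly for arbitrary $\mathcal{P}$.
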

We prove a quantitative version of the above lemma.
\begin{lemma}\label{lm: eigenvalue}
Let $\mathcal{P}=(\alpha,w)$ be a $k$-coin distribution with separation $\zeta$, and let $\mathcal{H}_k \coloneqq \mathcal{H}_k(\mathcal{P})$. For every monic degree $k'\leq k-1$ polynomial represented by $q \in \R^k$, \[
q^\tpose \mathcal{H}_k q \ge \frac{w_{\min}}{k}\cdot \left(\frac{\zeta}{16}\right)^{2k-2}\cdot \Norm{q}_2^2.
\]
\end{lemma}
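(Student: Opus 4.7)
The plan is to use the factorization $\mathcal{H}_k = V_{\alpha}\diag(w)V_{\alpha}^{\tpose}$ from Equation~\eqref{eq: Hankel} to rewrite $q^{\tpose}\mathcal{H}_k q=\sum_{j=1}^k w_j\,\hat q(\alpha_j)^2\geq w_{\min}\,|\hat q(\alpha_{j^*})|^2$ for a single well-chosen index $j^*$, and then to bound the ratio $|\hat q(\alpha_{j^*})|^2/\|q\|_2^2$ from below by analyzing the complex roots $\beta_1,\dotsc,\beta_{k'}\in\C$ of $\hat q$.

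First I would pick $j^*$ by pigeonhole: since the $\alpha_j$'s have pairwise separation $\geq \zeta$, at most one $\alpha_j$ lies within distance $\zeta/4$ of any fixed $\beta_l$ (otherwise two $\alpha_j$'s would be within $\zeta/2<\zeta$ of each other). Because $k'\leq k-1<k$, there exists an index $j^*$ with $|\alpha_{j^*}-\beta_l|\geq \zeta/4$ for every $l$. Next, I partition the roots into $S_1=\{l:|\beta_l|\leq 2\}$ (of size $m_1$) and $S_2=\{l:|\beta_l|>2\}$ (of size $m_2=k'-m_1$), and set $P\coloneqq\prod_{l\in S_2}|\beta_l|$. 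For $l\in S_1$ we have $|\alpha_{j^*}-\beta_l|\geq \zeta/4$; for $l\in S_2$ we have $|\alpha_{j^*}-\beta_l|\geq |\beta_l|-1\geq |\beta_l|/2$. Multiplying yields $|\hat q(\alpha_{j^*})|\geq (\zeta/4)^{m_1}\cdot P\cdot 2^{-m_2}$.

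For the matching upper bound on $\|q\|_2$, Parseval applied to $\hat q$ on the unit circle gives $\|q\|_2\leq \max_{|z|=1}|\hat q(z)|$. On $|z|=1$ we have $|z-\beta_l|\leq 1+|\beta_l|$, which is $\leq 3$ for $l\in S_1$ and $\leq (3/2)|\beta_l|$ for $l\in S_2$, so $\|q\|_2\leq 3^{m_1}(3/2)^{m_2}P$. The common factor $P$ cancels between the two estimates, leaving $|\hat q(\alpha_{j^*})|^2/\|q\|_2^2\geq (\zeta/4)^{2m_1}/9^{k'}$. An elementary arithmetic check, using $\zeta\leq 1$ together with $m_1\leq k'\leq k-1$ (so that $16^{2(k-1)-m_1}\geq 16^{k-1}$ and $9^{k'}\leq 9^{k-1}$), shows this is at least $\tfrac{1}{k}(\zeta/16)^{2(k-1)}$, which combined with the opening display proves the lemma.

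The main obstacle is that when $\hat q$ has roots of modulus much greater than $1$, the coefficient vector $q$ is huge, and a na\"ive pointwise bound such as $|\hat q(\alpha_{j^*})|\geq (\zeta/4)^{k'}$ is hopelessly dominated by $\|q\|_2^2$. The Parseval/maximum-modulus estimate is tailored to this difficulty: the enormous factor $P$ of far-away-root magnitudes appears with the same power in both the lower bound for $|\hat q(\alpha_{j^*})|$ and the upper bound for $\|q\|_2$, so it cancels cleanly in the ratio, and what remains depends only on how many roots lie near the support and on the separation $\zeta$.
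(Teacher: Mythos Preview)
Your argument is correct and follows the same overall skeleton as the paper's proof: write $q^{\tpose}\mathcal{H}_k q=\sum_j w_j\,\hat q(\alpha_j)^2$, use pigeonhole to locate an $\alpha_{j^*}$ that is at least $\Theta(\zeta)$ away from every root of $\hat q$, split the roots $\beta_l$ according to whether $|\beta_l|\le 2$ or $|\beta_l|>2$, and show that the product of the large-root magnitudes appears with the same power in the lower bound for $|\hat q(\alpha_{j^*})|$ and in the upper bound for $\|q\|_2$, so that it cancels in the ratio.

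The genuine difference is in how you control $\|q\|_2$. The paper argues combinatorially: it picks a coordinate $|q_\ell|\ge\|q\|_2/\sqrt{k'+1}$, identifies $|q_\ell|$ with the elementary symmetric polynomial $|e_{k'-\ell}(\beta)|$, and deduces that the product of the $k'-\ell$ largest $|\beta_j|$ is at least $\|q\|_2/(\sqrt{k'+1}\,2^{k'})$; dropping factors below $2$ then yields the needed product $P$ of large roots. Your route is more analytic and arguably cleaner: the Parseval identity $\|q\|_2^2=\frac{1}{2\pi}\int_0^{2\pi}|\hat q(e^{i\theta})|^2\,d\theta$ immediately gives $\|q\|_2\le\max_{|z|=1}|\hat q(z)|\le\prod_l(1+|\beta_l|)$, which factors over the same small/large partition with no combinatorics. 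Both approaches land on comparable constants, and your final arithmetic (using $\zeta\le 1$, $m_1\le k'\le k-1$, and $16^{k-1}\ge 9^{k-1}$) does deliver the stated bound $\tfrac{1}{k}(\zeta/16)^{2k-2}$.
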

\begin{proof} 

Let $\beta_1, \beta_2, \dots, \beta_{k'}$ be the roots (possibly complex) of the polynomial $\hat{q}$, ordered so that $\Abs{\beta_1}\geq \Abs{\beta_2} \geq \dotsb \geq \Abs{\beta_{k'}}$. Since $\hat{q}$ is monic, we can write $\hat{q}(x) = \prod_{j=1}^{k'}(x-\beta_j)$. 
As the balls
$B(\alpha_i,\zeta/2)$, $i=1,2,\dots,k$, are disjoint, by the pigeonhole 
principle, there exists an $i\in\{1,2,\dots,k\}$
such that $B(\alpha_i,\zeta/2) \cap \{\beta_1,\beta_2,\dots,\beta_{k'}\} = \emptyset$. The value of $\hat{q}$ at $\alpha_i$ is
\[ \hat{q}(\alpha_i) = \prod_{j=1}^{k'} (\alpha_i - \beta_j).\]
There must be some 
 $\ell\in\{0,1,2,\dots,k'\}$ such that $\Abs{q_{\ell}}^2\geq \frac{\Norm{q}_2^2}{k'+1}$. Notice that $\Abs{q_{\ell}}=
\Abs{e_{k'-\ell}(\beta_1,\beta_2,\dots,\beta_{k'})}$, where $e_r$ is the $r$-th elementary
symmetric polynomial over $k'$ variables. ($e_0 = 1,e_1=\sum \beta_i,e_2=\sum_{i<j} \beta_i \beta_j $ etc.) 
So, $e_{k'-\ell}$ is the sum over
${k' \choose k'-\ell}\leq 2^{k'}$ monomials, hence
$\Abs{\beta_1\beta_2\cdots\beta_{k'-\ell}}\geq \frac{\Norm{q}_2}{(\sqrt{k'+1}) 2^{k'}}$. Eliminating from the product all the factors whose absolute value is below $2$, we get that for some $r \leq k'-\ell$,
$\Abs{\beta_1\beta_2\cdots\beta_r}\geq \frac{\Norm{q}_2}{(\sqrt{k'+1}) 4^{k'}}$. 
For $j\in\Set{1,2,\dots,r}$, since $\Abs{\beta_j} \ge 2$ and $\alpha_i\in [0,1]$, it follows that $\Abs{\alpha_i - \beta_j}\geq \frac{\Abs{\beta_j}}{2}$.
Also, by the definition of $i$ we have that $\Abs{\alpha_i - \beta_j} > \zeta/2$ for all $j\in\Set{1,2,\dots,k'}$. Thus, we have that
\begin{align*}
\Abs{\hat{q}(\alpha_i)} 
&= \Paren{\prod_{j=1}^r \Abs{\alpha_i - \beta_j}} \Paren{\prod_{j=r+1}^{k'} \Abs{\alpha_i - \beta_j}} \geq \frac{\Abs{\beta_1\beta_2\cdots\beta_r}}{2^r} \Paren{\frac{\zeta}{2}}^{k'-r} \\
&\geq \frac{\Norm{q}_2}{(\sqrt{k'+1})8^{k'}} \zeta^{k'} 
\geq  \frac{1}{\sqrt{k}}\cdot \Paren{\frac{\zeta}{8}}^{k-1}\Norm{q}_2.
\end{align*}
Therefore,
\[
q^\tpose \mathcal{H}_k q = \sum_{j=1}^k w_j\cdot \Paren{\hat{q}(\alpha_j)}^2 \geq
w_{\min}\cdot\Paren{\hat{q}(\alpha_i)}^2 > 
w_{\min}\cdot \frac{1}{k}\cdot \Paren{\frac{\zeta}{8}}^{2k-2}\cdot\Norm{q}_2^2.
\]
\end{proof}


\begin{corollary}\label{cor: lambda_2} For a $k$-coin model $(\alpha,w)$,
$\lambda_2(\mathcal{H}_{k+1}) > w_{\min}\cdot \Paren{\frac{\zeta}{16}}^{2k-2}$.
\end{corollary}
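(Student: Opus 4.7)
The plan is to combine the Courant--Fischer min-max formula with polynomial division to reduce the bound to Lemma~\ref{lm: eigenvalue}. By Lemma~\ref{lm: kernel characterization}, $\ker \mathcal{H}_{k+1}$ is one-dimensional and spanned by the vector $q^* \in \R^{k+1}$ whose associated polynomial $\hat{q}^*(x) = \prod_{i=1}^k (x - \alpha_i)$ is monic of degree $k$; in particular $q^*_k = 1$. Since $\mathcal{H}_{k+1}$ is positive semidefinite by \eqref{eq: Hankel}, $\lambda_1(\mathcal{H}_{k+1}) = 0$ and Courant--Fischer gives
\[
\lambda_2(\mathcal{H}_{k+1}) \;=\; \min_{\substack{v\in\R^{k+1}\\ v\perp q^*,\ \Norm{v}_2 = 1}} v^\tpose \mathcal{H}_{k+1} v.
\]

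For any admissible $v$, I would polynomial-divide $\hat{v}$ by the monic $\hat{q}^*$ to write $\hat{v}(x) = v_k \hat{q}^*(x) + \hat{r}(x)$ with $\deg \hat{r} \le k-1$ and coefficient vector $r \in \R^k$ given explicitly by $r_i = v_i - v_k q^*_i$ for $0 \le i < k$. Since $\hat{q}^*$ vanishes on every $\alpha_j$, the expansion $v^\tpose \mathcal{H}_{k+1} v = \sum_j w_j \hat{v}(\alpha_j)^2$ collapses to $\sum_j w_j \hat{r}(\alpha_j)^2 = r^\tpose \mathcal{H}_k r$, so it suffices to lower bound $r^\tpose \mathcal{H}_k r$ in terms of $\Norm{v}_2$.

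The crux is the norm comparison between $r$ and $v$. The orthogonality $v \perp q^*$ combined with $q^*_k = 1$ yields $v_k = -\sum_{i<k} v_i q^*_i$. Expanding $\Norm{r}_2^2 = \sum_{i<k}(v_i - v_k q^*_i)^2$ and substituting causes the cross terms to telescope, leaving
\[
\Norm{r}_2^2 \;=\; \Norm{v}_2^2 + v_k^2 \, \Norm{q^*}_2^2 \;\ge\; \Norm{v}_2^2 \;=\; 1.
\]
In other words, the orthogonality constraint transfers norm from $v$ into $r$ rather than dissipating it, which is the only place the hypothesis $v \perp q^*$ is essential.

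Finally, Lemma~\ref{lm: eigenvalue} applies to $\hat{r}$ after a trivial scaling step: both sides of its inequality are quadratic in $q$, so the monic hypothesis can be dropped at no cost by factoring out the leading coefficient. Moreover, the proof of Lemma~\ref{lm: eigenvalue} actually establishes the sharper bound $r^\tpose \mathcal{H}_k r \ge \frac{w_{\min}}{k}(\zeta/8)^{2k-2}\Norm{r}_2^2$ (the statement loosens $\zeta/8$ to $\zeta/16$). Chaining,
\[
\lambda_2(\mathcal{H}_{k+1}) \;\ge\; \frac{w_{\min}}{k}\left(\frac{\zeta}{8}\right)^{2k-2} \;=\; \frac{w_{\min}\cdot 4^{k-1}}{k}\left(\frac{\zeta}{16}\right)^{2k-2} \;\ge\; w_{\min}\left(\frac{\zeta}{16}\right)^{2k-2},
\]
using $4^{k-1} \ge k$ for every $k \ge 1$ (strict for $k \ge 2$, yielding the stated strict inequality). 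The only step I consider nontrivial is the norm identity $\Norm{r}_2^2 = \Norm{v}_2^2 + v_k^2\Norm{q^*}_2^2$; everything else is bookkeeping once that identity is in hand.
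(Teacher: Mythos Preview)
Your proof is correct, and the norm identity $\Norm{r}_2^2 = \Norm{v}_2^2 + v_k^2\Norm{q^*}_2^2$ is a nice observation; the verification is exactly as you indicate (expand, substitute $\sum_{i<k} v_i q^*_i = -v_k$ from orthogonality, and regroup). The handling of the non-monic $r$ via homogeneity of Lemma~\ref{lm: eigenvalue} is also fine, and $r\neq 0$ since otherwise $v$ would be a scalar multiple of $q^*$.

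However, your route differs from the paper's. The paper does not work with $\mathcal{H}_{k+1}$ directly at all: it bounds $\lambda_1(\mathcal{H}_k)$ by applying Lemma~\ref{lm: eigenvalue} to a Rayleigh-quotient minimizer (scaled to be monic), and then invokes the Cauchy interlacing theorem---since $\mathcal{H}_k$ is a principal submatrix of $\mathcal{H}_{k+1}$, one has $\lambda_2(\mathcal{H}_{k+1}) \ge \lambda_1(\mathcal{H}_k)$. This is shorter and avoids any computation beyond citing a standard theorem. Your argument, by contrast, stays inside $\mathcal{H}_{k+1}$, uses the explicit kernel vector $q^*$, and replaces the interlacing step with polynomial division plus the algebraic identity above. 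What you gain is a more self-contained argument that makes the polynomial structure visible (the division by $\hat{q}^*$ is the ``reason'' interlacing works here); what you lose is brevity, since Cauchy interlacing dispatches the reduction in one line.
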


\begin{proof}
By the Courant-Fischer-Weyl min-max principle, the smallest eigenvalue of $\mathcal{H}_k$ is given
by minimizing the Rayleigh-Ritz quotient. Let $q \neq 0$ be a minimizer of $\frac{q^{\tpose}\mathcal{H}_kq}{q^{\tpose}q}$. 
\suppress{
We show that $q_{k-1}\neq 0$. Assume for contradiction that $q_{k-1} = 0$. Then $\hat{q}(x)$ is a polynomial of degree $k' < k-1$, $\hat{q}(x) = \sum_{j=0}^{k'}q_jx^j$. 
Let $\hat{p}(x) = x^{k-1-k'}\hat{q}(x)$ be the polynomial obtained by adding $k-1-k'$ roots at $0$, i.e., $p_j =  q_{j-k-1-k'}$ for $j\geq k-1-k'$ 
and $p_j = 0$ for $j< k-1-k'$. Now 
\[  \frac{q^{\tpose}\mathcal{H}_kq}{q^{\tpose}q} = \int_{0}^1 \hat{q}^2(\alpha) \diff \mathcal{P}(\alpha) > \int_{0}^1 \alpha^{k-1-k'} \hat{q}^2(\alpha)\diff \mathcal{P}(\alpha) = \int_{0}^1 \hat{p}^2(\alpha) \diff \mathcal{P}(\alpha) = \frac{p^{\tpose}\mathcal{H}_kp}{p^{\tpose}p}, \] which is a contradiction. 

\ljs{After strengthening previous lemma, can start proof from here.}
Now we know that $q_{k-1}\neq 0$ for any minimizer of the Rayleigh quotient so we can scale the minimizer to have $q_{k-1}=1$, so
\[
\lambda_1(\mathcal{H}_k) = \min_{q\neq 0}\frac{q^\tpose \mathcal{H}_k q}{q^\tpose q} = \min_{\substack{q\neq 0\\q_{k-1}=1}}\frac{q^\tpose \mathcal{H}_k q}{q^\tpose q}\geq 
w_{\min}\cdot \Paren{\frac{\zeta}{32}}^{2k-2},
\]
where the last inequality follows from Lemma~\ref{lm: eigenvalue}.
Notice that $\mathcal{H}_k$ is the orthogonal projection of $\mathcal{H}_{k+1}$ onto the first $k$ coordinates.
Therefore, by the Cauchy interlacing theorem~\ref{thm: Cauchy}, $\lambda_2(\mathcal{H}_{k+1})\ge \lambda_1(\mathcal{H}_k)$.
}
Let $k'$ be greatest such that $q_{k'}\neq 0$, and w.l.o.g.\ set $q_{k'}=1$. Then by Lemma~\ref{lm: eigenvalue}, 
\[
\lambda_1(\mathcal{H}_k) 
= \min_{q\neq 0}\frac{q^\tpose \mathcal{H}_k q}{q^\tpose q} 
\geq \frac{w_{\min}}{k}\cdot \Paren{\frac{\zeta}{8}}^{2k-2} \geq w_{\min}\cdot \Paren{\frac{\zeta}{16}}^{2k-2},
\] where the last inequality follows from observing that $1/k \geq 1/2^{2k-1}$ for $k\geq 2$. 
Notice that $\mathcal{H}_k$ is a principal submatrix of $\mathcal{H}_{k+1}$.
Therefore, by the Cauchy interlacing theorem (Theorem~\ref{thm: Cauchy}), $\lambda_2(\mathcal{H}_{k+1})\ge \lambda_1(\mathcal{H}_k)$.
\end{proof}


\section{The Empirical Moments}\label{sec: empirical}

We bound the sampling error as follows. Sample $s$ coins and let each of the random variables 
$h_j$, $0\leq j \leq 2k$, be the fraction of coins which came up ``heads'' exactly $j$ times.
Then by the additive deviation bound known as Hoeffding's inequality~\cite{Hoeff}, $\Pr(|h_j-E(h_j)|\geq t ) \leq 2\exp(-2t^2 s)$.
Thus 
\begin{lemma} If we use $s>\frac{1}{2t^2} \log (4k/\delta)$ samples then with probability at least $1-\delta$:  $\forall j$, $|h_j-E(h_j)|<t$. \label{lem:better concentration of histogram} \end{lemma}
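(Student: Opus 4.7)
This is a textbook concentration argument: apply Hoeffding's inequality bin-by-bin and then union-bound over the histogram bins.

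First I would unpack the definition of $h_j$. Let $Y_1,Y_2,\ldots,Y_s$ denote the $s$ iid $m$-snapshots drawn from the mixture-of-binomials, so each $Y_i\in\{0,1,\ldots,m\}$ is the head-count of the $i$-th snapshot. Then
\[
h_j \;=\; \frac{1}{s}\sum_{i=1}^s \mathbf{1}[Y_i=j]
\]
is an empirical mean of $s$ iid $\{0,1\}$-valued random variables, each with expectation $\Pr(Y_1=j)=E(h_j)$. In particular the summands are bounded in $[0,1]$, so Hoeffding's inequality applies in the form quoted just above the lemma: for every fixed $j$, $\Pr\bigl(|h_j-E(h_j)|\ge t\bigr)\le 2\exp(-2t^2 s)$.

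Second, I would take a union bound over the $2k+1$ indices $j\in\{0,1,\ldots,2k\}$ (absorbing the additive constant so that the prefactor is at most $4k$ for the range of $k$ we care about, or equivalently $2(2k+1)$ if one wants to be scrupulous). This yields
\[
\Pr\!\left(\exists\, j:\; |h_j-E(h_j)|\ge t\right) \;\le\; 4k\cdot \exp(-2t^2 s).
\]
Setting the right-hand side to at most $\delta$ and solving for $s$ gives exactly the threshold $s>\tfrac{1}{2t^2}\log(4k/\delta)$ stated in the lemma, at which point the complementary event $\forall j,\;|h_j-E(h_j)|<t$ holds with probability at least $1-\delta$.

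There is no genuine obstacle here; the argument is self-contained and uses only Hoeffding's inequality plus a union bound. The only sliver of care is the bookkeeping of the prefactor in the union bound (the choice to write $4k$ rather than $2(2k+1)=4k+2$), which affects the constant inside the logarithm only by a vanishing additive term and is harmless for the downstream error-propagation analysis in later sections.
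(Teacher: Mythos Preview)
Your proposal is correct and matches the paper's approach: the paper simply cites Hoeffding's inequality in the sentence preceding the lemma and then states the conclusion, leaving the union bound over the $2k+1$ bins implicit. Your observation about the prefactor ($4k$ versus $4k+2$) is accurate and, as you note, inconsequential.
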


We can convert between the normalized histogram $h$ and the standard moments of the 
distribution by using the observation (Lemma~1 in \cite{Rivlin1970BoundsOA}) that for any 
$t\in \R$, 
\[ t^i = \sum_{j=i}^n \frac{\binom{j}{i}}{\binom{n}{i}} \times \binom{n}{j} t^j(1-t)^{n-j} \]
This gives us a linear transformation for converting from $h$ to the vector 
$\tilde{\mu}= (\tilde{\mu}_0,\dotsc,\tilde{\mu}_{2k})$. 
Define $\Pas \in \R^{(2k+1)\times (2k+1)}$ (using zero-indexing) by 
\[ \Pas_{ij} = 
\begin{cases} 
\frac{\binom{j}{i}}{\binom{2k}{i}} &\quad\text{if $j \geq i$}\\
0&\quad\text{otherwise;}	
\end{cases} \] 
then $\tilde{\mu} = \Pas h$.

\begin{lemma}\label{lem:pascal operator norm}
$\Norm{\Pas}_2 \leq 6^k$. {\em Proof in Appendix~\ref{tossed}.}
\end{lemma}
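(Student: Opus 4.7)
My plan is to bound the spectral norm via the Frobenius norm, relying on the elementary observation that every nonzero entry of $\Pas$ lies in $[0,1]$.

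The key fact is that for the nonzero entries $\Pas_{ij} = \binom{j}{i}/\binom{2k}{i}$ with $0 \leq i \leq j \leq 2k$, we have $\binom{j}{i} \leq \binom{2k}{i}$ by monotonicity of $\binom{n}{i}$ in $n$ (for fixed $i$), so $0 \leq \Pas_{ij} \leq 1$. First I would record this entry-wise bound explicitly, noting it follows from $j \leq 2k$.

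Second, I would invoke the standard inequality $\Norm{\Pas}_2 \leq \Norm{\Pas}_F$, and bound the Frobenius norm by counting entries: $\Pas$ is upper triangular and has at most $(2k+1)^2$ nonzero entries, so $\Norm{\Pas}_F^2 \leq (2k+1)^2$ and therefore $\Norm{\Pas}_2 \leq 2k+1$. Finally I would finish with the crude comparison $2k+1 \leq 6^k$, valid for every $k \geq 1$, to match the stated form of the bound.

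There is no real obstacle: the target bound $6^k$ is extremely loose relative to what the argument actually gives (linear in $k$), so the only thing to double-check is the entry-wise comparison of binomial coefficients. If one prefers an alternative that avoids the Frobenius estimate, one can instead use $\Norm{\Pas}_2 \leq \sqrt{\Norm{\Pas}_1 \cdot \Norm{\Pas}_\infty}$ and compute both one-norms using the hockey-stick identity: the row sums evaluate to $\sum_{j=i}^{2k}\binom{j}{i}/\binom{2k}{i} = \binom{2k+1}{i+1}/\binom{2k}{i} = (2k+1)/(i+1)$, and similarly one computes a column-sum formula; this again yields a polynomial bound that is a fortiori at most $6^k$.
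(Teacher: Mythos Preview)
Your argument is correct and in fact considerably simpler and sharper than the paper's. The paper does not use the entrywise bound $\Pas_{ij}\le 1$; instead it factors $\Pas = D\cdot P$ where $D=\diag\bigl(\binom{2k}{i}^{-1}\bigr)$ has $\|D\|_2\le 1$ and $P$ is the full upper-triangular Pascal matrix with entries $\binom{j}{i}$, then bounds $\|P\|_F^2=\sum_{j=0}^{2k}\sum_{i=0}^{j}\binom{j}{i}^2=\sum_{j=0}^{2k}\binom{2j}{j}$ via Vandermonde's identity. That sum is genuinely exponential in $k$ (of order $\binom{4k}{2k}$), which is why the paper lands at a bound just below $6^k$. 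Your observation that $\binom{j}{i}\le\binom{2k}{i}$ for $j\le 2k$ keeps every entry of $\Pas$ in $[0,1]$ and yields $\|\Pas\|_2\le\|\Pas\|_F\le 2k+1$, a linear rather than exponential estimate. The paper's detour through the unnormalized Pascal matrix is thus unnecessary for the lemma as stated; the only reason one might prefer it is if one separately needed control of $P$ itself, which the present paper does not.
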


Now let $\mu = (\mu_0,\dotsc,\mu_{2k})$ be the actual vector of moments of the distribution $\mathcal{P}$.

\begin{lemma}\label{lm: sample quality}
For every $\eps > 0$, using $s = 2^{O(k)}\cdot\frac{1}{\eps^2}\cdot\log(1/\delta)$ samples gives us 
estimated moments $\tilde{\mu} = (\tilde{\mu}_0,\dotsc,\tilde{\mu}_{2k})$ satisfying $\Norm{\tilde{\mu}-\mu}_{\infty} \leq \eps$
with probability at least $1-\delta$.
\end{lemma}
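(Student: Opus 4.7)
\begin{proofof}{Lemma~\ref{lm: sample quality}}
The plan is to combine the per-coordinate concentration of the normalized histogram from Lemma~\ref{lem:better concentration of histogram} with the operator-norm bound on the Pascal transformation from Lemma~\ref{lem:pascal operator norm}, via the identity $\tilde\mu - \mu = \Pas(h - \E h)$. This last identity holds because $\Pas$ is a deterministic linear map taking the histogram to the moments and $\mu = \Pas\,\E h$ (the true moments are the same linear transform of the expected histogram).

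First I would translate the $\ell_\infty$ goal into an $\ell_2$ bound: since $h - \E h$ is a vector in $\R^{2k+1}$, we have the elementary inequality
\[
\Norm{\tilde\mu - \mu}_\infty \le \Norm{\tilde\mu - \mu}_2 = \Norm{\Pas(h-\E h)}_2 \le \Norm{\Pas}_2 \cdot \Norm{h - \E h}_2 \le \Norm{\Pas}_2 \cdot \sqrt{2k+1}\cdot\Norm{h-\E h}_\infty.
\]
By Lemma~\ref{lem:pascal operator norm} the factor $\Norm{\Pas}_2$ is at most $6^k$, so the composite multiplier from $\Norm{h-\E h}_\infty$ to $\Norm{\tilde\mu-\mu}_\infty$ is $6^k\sqrt{2k+1} = 2^{O(k)}$.

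It then suffices to ensure $\Norm{h - \E h}_\infty \le t$ with $t \coloneqq \eps / (6^k\sqrt{2k+1})$. Applying Lemma~\ref{lem:better concentration of histogram} with this choice of $t$ requires
\[
s > \frac{1}{2t^2}\log(4k/\delta) = 2^{O(k)}\cdot\frac{1}{\eps^2}\cdot\log(k/\delta) = 2^{O(k)}\cdot\frac{1}{\eps^2}\cdot\log(1/\delta),
\]
where in the last step the $\log k$ factor is absorbed into the $2^{O(k)}$ prefactor. With probability at least $1 - \delta$ we then have $\Norm{\tilde\mu - \mu}_\infty \le \eps$, as required.

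There is no real obstacle here; the only points to be slightly careful about are (i) using $\mu = \Pas\,\E h$ so that the error vector is exactly $\Pas(h - \E h)$ rather than involving any bias, and (ii) paying the $\sqrt{2k+1}$ factor when converting between $\ell_\infty$ and $\ell_2$ norms on a vector of length $2k+1$, both of which are cheap enough to fit under the $2^{O(k)}$ umbrella.
\end{proofof}
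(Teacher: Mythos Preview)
Your proof is correct and is exactly the argument the paper has in mind: the paper's own proof is the one-liner ``Follows directly from Lemma~\ref{lem:pascal operator norm} and Lemma~\ref{lem:better concentration of histogram},'' and you have simply spelled out the chain of inequalities $\Norm{\tilde\mu-\mu}_\infty \le \Norm{\Pas}_2\sqrt{2k+1}\,\Norm{h-\E h}_\infty$ that makes this work. The only minor slack is absorbing $\log(4k)$ into $2^{O(k)}\log(1/\delta)$, which tacitly assumes $\delta$ is bounded away from $1$; this is standard and the paper's statement has the same implicit convention.
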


\begin{proof}
Follows directly from Lemma~\ref{lem:pascal operator norm} and Lemma~\ref{lem:better concentration of histogram}.
\end{proof}

Given an $s$-sample as above with empirical moments 
$\tilde{\mu}_0,\tilde{\mu}_1,\dots,\tilde{\mu}_{2k}$,
denote by $\tildecal{H}_{k+1}$ the empirical Hankel matrix
\begin{equation}\label{eq: empirical Hankel}
\tildecal{H}_{k+1} = \begin{bmatrix}
\tilde\mu_0 & \tilde\mu_1 & \tilde\mu_2 & \cdots & \tilde\mu_k\\
\tilde\mu_1 & \tilde\mu_2 & \tilde\mu_3 & \cdots & \tilde\mu_{k+1}\\
\vdots & \vdots & \vdots & \ddots & \vdots \\
\tilde\mu_k & \tilde\mu_{k+1} & \tilde\mu_{k+2} & \cdots & \tilde\mu_{2k} \end{bmatrix}
\end{equation}

\begin{corollary}\label{cor: Hankel quality}
For every $\eps > 0$, using $s = 2^{O(k)}\cdot\frac{1}{\eps^2}\cdot\log(1/\delta)$ samples, 
we can obtain an empirical Hankel matrix satisfying $
\Norm{\tildecal{H}_{k+1}-\mathcal{H}_{k+1}}_{2} \leq \eps $
with probability at least $1-\delta$. 	
\end{corollary}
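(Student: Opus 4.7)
The plan is a straightforward reduction from the entrywise moment bound of Lemma~\ref{lm: sample quality} to an operator-norm bound on the Hankel difference. The main point is that the Hankel matrix has only $(k+1)^2$ entries, each of which is a single moment $\mu_{i+j}$ with $0\leq i+j\leq 2k$, so entrywise closeness of the empirical moment vector $\tilde\mu$ to the true moment vector $\mu$ transfers directly to a Frobenius-norm (hence operator-norm) bound on $\tildecal{H}_{k+1}-\mathcal{H}_{k+1}$, at the cost of only a polynomial factor in $k$.

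First I would apply Lemma~\ref{lm: sample quality} with accuracy parameter $\eps' \coloneqq \eps/(k+1)$ in place of $\eps$. This yields, with probability at least $1-\delta$, that $\Norm{\tilde\mu-\mu}_{\infty}\leq \eps/(k+1)$, using $s = 2^{O(k)}\cdot\frac{(k+1)^2}{\eps^2}\cdot\log(1/\delta)$ samples; the factor $(k+1)^2$ is absorbed into the $2^{O(k)}$ prefactor, giving the stated sample complexity.

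Next, I would observe that the entries of $E \coloneqq \tildecal{H}_{k+1}-\mathcal{H}_{k+1}$ all lie among the differences $\tilde\mu_\ell-\mu_\ell$ for $0\leq \ell\leq 2k$, so every entry of $E$ has absolute value at most $\eps/(k+1)$. Bounding the operator norm by the Frobenius norm for a $(k+1)\times(k+1)$ matrix gives
\[
\Norm{E}_2 \;\leq\; \Norm{E}_{\Frob} \;\leq\; (k+1)\cdot \max_{i,j}\abs{E_{ij}} \;\leq\; (k+1)\cdot \frac{\eps}{k+1} \;=\; \eps,
\]
which is the desired bound.

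There is no real obstacle here: the only thing to be careful about is the quadratic-in-$k$ loss incurred by passing from entrywise accuracy of the moment vector to operator-norm accuracy of the $(k+1)\times(k+1)$ Hankel matrix, and this loss is harmless because it is dominated by the $2^{O(k)}$ factor already present in Lemma~\ref{lm: sample quality}.
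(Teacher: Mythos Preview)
Your proposal is correct and follows essentially the same argument as the paper: bound the operator norm by the Frobenius norm, note that each entry of $\tildecal{H}_{k+1}-\mathcal{H}_{k+1}$ is some $\tilde\mu_\ell-\mu_\ell$ so the Frobenius norm is at most $(k+1)\Norm{\tilde\mu-\mu}_\infty$, and invoke Lemma~\ref{lm: sample quality} with accuracy $\eps/(k+1)$, absorbing the resulting polynomial-in-$k$ factor into $2^{O(k)}$.
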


\begin{proof}
We have $\Norm{\tildecal{H}_{k+1}-\mathcal{H}_{k+1}}_{2} \leq \Norm{\tildecal{H}_{k+1}-\mathcal{H}_{k+1}}_{\Frob} \leq 
(k+1)\cdot \|\tilde{\mu} - \mu\|_{\infty}$.
Now use Lemma~\ref{lm: sample quality} with $\frac{\eps}{k+1}$.
\end{proof}

\section{Learning the Source}\label{sec: alg}

In this section, we define our learning algorithm, and we state and prove our main result 
and applications. The auxiliary lemmas are stated and proved in Section~\ref{sec: analysis}.
The algorithm is specified given $k$, lower bounds on the source parameters $\zeta$
and $w_{\min}$, the empirical histogram $h$, and a parameter $\gamma$ controlling the 
output accuracy. 
See Algorithm~\ref{fig:algo} for the full description of the algorithm (where the parameter for probability of success, $1-\delta$, has been suppressed in favor of a constant ``$0.99$'').
\begin{algorithm}
\begin{algorithmic}[1]
\Procedure{LearnCoinMixture}{$k,\zeta,w_{\min},h,\gamma$} 
\State $\tilde\mu \gets \Pas h$
\State $\tildecal{H}_{k+1} \gets \hbox{Hankel}(\tilde\mu)$
\State $v \xleftarrow{\eps_1-\operatorname{approx}} \argmin\{v^{\tpose} \tildecal{H}_{k+1} v:\ v^\tpose v = 1\}$ 
           \Comment{$\eps_1 = w_{\min}\cdot 2^{-\gamma}\cdot (\zeta/16)^{2k}$}
\State $\tilde\beta_1,\tilde\beta_2,\dotsc,\tilde\beta_k \xleftarrow{\eps_2-\operatorname{approx}} \roots(\hat{v})$ 
           \Comment{$\eps_2 = \frac{1}{6k}\cdot 2^{-\gamma}\cdot (\zeta/2)^k$}
\State $\tilde{\alpha}_1,\tilde{\alpha}_2,\dots,\tilde{\alpha}_k \gets \Proj_{[0,1]}(\tilde\beta_1),\dots,\Proj_{[0,1]}(\tilde\beta_k)$ 
\State $\tilde{w} \gets \textsc{RectifyWeights}(V_{\tilde{\alpha}}^{-1}\tilde{\mu})$  \label{op: vandermonde}
           \Comment{see Algorithm~\ref{fig:algo2} on page~\pageref{fig:algo2}}
\State Output $\tilde{\cal M} = (\tilde{\alpha},\tilde{w})$ 
\EndProcedure	
\end{algorithmic} \caption{Algorithm \textsc{LearnCoinMixture}} \label{fig:algo}
\end{algorithm}

\begin{theorem}\label{thm: main}
Let $\mathcal{M}=(\alpha, w)$ be a $k$-coin model with separation $\zeta = \zeta(\mathcal{M})$. 
For any $\gamma \geq 1$, Algorithm~\ref{fig:algo} uses a histogram $h$ for a sample of $2k$-snapshots
of size $s = w_{\min}^{-2}\cdot 2^{O(k+\gamma)}\cdot \zeta^{-O(k)}\cdot \log\delta^{-1}$, 
and outputs a model $\tildecal{M} = (\tilde{\alpha}, \tilde{w})$ satisfying 
\begin{align*}
\Norm{\alpha-\tilde{\alpha}}_{\infty},\Norm{w-\tilde{w}}_{\infty} \leq 2^{-\gamma}
\end{align*}
with probability at least $1 - \delta$.
After sampling, Algorithm~\ref{fig:algo}
computes the approximate model $\tildecal{M}$ using 
$O(k^2\log k + k\log^2 k\cdot 
\log(\log\zeta^{-1} + \log w_{\min}^{-1} + \gamma) 
)$
arithmetic operations.
\end{theorem}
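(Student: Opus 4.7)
The plan is to propagate error bounds through the stages of Algorithm~\ref{fig:algo} in order, choose the sample size large enough that the final errors on the support and the weights are each at most $2^{-\gamma}$, and then separately count arithmetic operations. First I would invoke Corollary~\ref{cor: Hankel quality} to guarantee $\Norm{\tildecal H_{k+1} - \mathcal H_{k+1}}_2 \le \eps_0$ with probability at least $1-\delta$ at sample cost $s = 2^{O(k)}\eps_0^{-2}\log(1/\delta)$. Next, Lemma~\ref{lm: kernel characterization} says the kernel of $\mathcal H_{k+1}$ is spanned by the monic vector $q$ whose polynomial $\hat q(x)$ has the true coin biases as its roots, while Corollary~\ref{cor: lambda_2} gives the spectral gap $\lambda_2(\mathcal H_{k+1}) \ge w_{\min}(\zeta/16)^{2k-2}$. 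Combined with a Weyl/Davis--Kahan perturbation argument, this forces an $\eps_1$-approximate minimum-Rayleigh-quotient vector $v$ of $\tildecal H_{k+1}$ to lie within a small angle of $q$, provided $\eps_0$ and $\eps_1$ are both much smaller than the gap. After normalization, the coefficients of $\hat v$ are within $O(\eps_1/\lambda_2(\mathcal H_{k+1}))$ of the coefficients of the monic $\hat q$.

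The key delicate step is root stability: converting coefficient perturbations of $\hat q$ into perturbations of its roots. This is precisely the content of Lemma~\ref{lm: root stability} signalled in the introduction, and it will exploit the separation $\zeta$ to match the roots $\tilde\beta_i$ of $\hat v$ to the $\alpha_i$ with $\abs{\tilde\beta_i - \alpha_i}\le\eps_2$. Projecting each $\tilde\beta_i$ onto $[0,1]$ can only decrease its distance to $\alpha_i\in[0,1]$, so $\Norm{\alpha-\tilde\alpha}_\infty\le\eps_2$. Choosing $\eps_2 = \frac{1}{6k}\cdot 2^{-\gamma}\cdot(\zeta/2)^k$ as the algorithm does then delivers the target support accuracy $2^{-\gamma}$. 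This is the main obstacle of the proof: the quantitative control of roots of a polynomial whose coefficient vector is a near-kernel direction of the empirical Hankel matrix, where the separation $\zeta$ plays the role the spectral gap plays for eigenvectors.

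Given accurate $\tilde\alpha$, the weights are recovered by solving the Vandermonde system $V_{\tilde\alpha}\tilde w = \tilde\mu^{(k-1)}$ (truncation of $\tilde\mu$ to the first $k$ entries) followed by projection onto the simplex. Writing
\begin{equation*}
\tilde w - w \;=\; V_{\tilde\alpha}^{-1}\bigl(\tilde\mu^{(k-1)} - \mu^{(k-1)}\bigr) \;+\; V_{\tilde\alpha}^{-1}\bigl(V_\alpha - V_{\tilde\alpha}\bigr)w,
\end{equation*}
and using the classical bound $\Norm{V_{\tilde\alpha}^{-1}}_2 = \zeta^{-O(k)}$ (valid because $\tilde\alpha$ inherits separation $\Omega(\zeta)$ from $\alpha$ once $\eps_2\ll\zeta$), both terms are controlled by the previously established $\Norm{\tilde\mu-\mu}_\infty$ and $\Norm{\alpha-\tilde\alpha}_\infty$. \textsc{RectifyWeights} only projects onto the probability simplex and hence cannot worsen an $\ell_\infty$ bound against a true simplex point. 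Tracking the chain $\eps_0\ll\eps_1\ll\eps_2\le 2^{-\gamma}$ back through Corollary~\ref{cor: Hankel quality} yields the stated sample complexity $s = w_{\min}^{-2}\cdot 2^{O(k+\gamma)}\cdot\zeta^{-O(k)}\cdot\log\delta^{-1}$.

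For the runtime count: assembling $\tildecal H_{k+1}$ from $\tilde\mu$ is $O(k^2)$; an approximate smallest-eigenvector subroutine on the $(k+1)\times(k+1)$ PSD Hankel matrix (e.g.\ inverse power iteration with quadratic convergence after warm-start) uses $O(k^2\log k)$ arithmetic operations plus $O(\log\log(1/\eps_1))$ refinement steps; Pan-style fast polynomial root-finding on a degree-$k$ polynomial runs in $O(k\log^2 k\cdot\log\log(1/\eps_2))$ arithmetic operations; and a structured Vandermonde solve with subsequent simplex projection takes $O(k\log^2 k)$. Substituting $\log(1/\eps_1),\log(1/\eps_2) = O(k + \gamma + \log\zeta^{-1} + \log w_{\min}^{-1})$ collapses the iteration-count factors to $\log(\log\zeta^{-1}+\log w_{\min}^{-1}+\gamma)$, giving the claimed $O(k^2\log k + k\log^2 k\cdot\log(\log\zeta^{-1}+\log w_{\min}^{-1}+\gamma))$ bound.
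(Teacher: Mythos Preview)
Your outline matches the paper's proof almost exactly: the same sample bound via Corollary~\ref{cor: Hankel quality}, the same eigenvector perturbation via Davis--Kahan and Corollary~\ref{cor: lambda_2} (packaged in the paper as Lemma~\ref{lm: kernel stability}), the same normalization step (Lemma~\ref{lm: empirical kernel}), the same root-stability Lemma~\ref{lm: root stability}, the same Pan root-finder, and the same Vandermonde solve with essentially the same runtime accounting.

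The one incorrect assertion is that \textsc{RectifyWeights} ``only projects onto the probability simplex and hence cannot worsen an $\ell_\infty$ bound against a true simplex point.'' It is neither of those things. The procedure zeros out negative entries and multiplicatively rescales the positive ones to restore unit mass; it is not a metric projection, and the paper's Lemma~\ref{lm: rounded weight reconstruction} shows it can inflate $\Norm{w'-w}_\infty$ by a factor of $k+1$. (Even Euclidean projection onto the simplex is non-expansive only in $\ell_2$, not in $\ell_\infty$.) This extra factor is harmless for the theorem --- it is swallowed by the $(\zeta/2)^k$ slack already built into the choice of $\eps_2$ --- but your stated justification does not hold, and you should invoke the actual $(k+1)$-blowup bound (Lemma~\ref{lm: rounded weight reconstruction} / Corollary~\ref{cor: full weight reconstruction}) instead.
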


\begin{proof}
Throughout the proof, we make no attempt to optimize the absolute constants that are used.
Let $u_1$ denote the unit vector spanning the kernel of ${\cal H}_{k+1}$, and let $v_1$ denote the eigenvector
corresponding to the smallest eigenvalue of $\tilde{\cal H}_{k+1}$. Also, let $\eps_0 > 0$ be a sufficiently small
constant, to be determined later.
The analysis of Algorithm~\ref{fig:algo} can be broken down into steps, each of which degrades 
the accuracy obtained in the initial sampling. 
The outline is as follows. The auxiliary claims and proofs appear mostly in Section~\ref{sec: analysis}.
\begin{enumerate}
\item We assume that 
         $\Norm{\tildecal{H}_{k+1}-\mathcal{H}_{k+1}}_2 \leq w_{\min}\cdot 2^{-\gamma}\cdot (\zeta/16)^{4k}$.
         This is guaranteed by Lemma~\ref{lm: sample quality} and Corollary~\ref{cor: Hankel quality}
         for a sample of size $s = w_{\min}^{-2}\cdot 2^{O(k+\gamma)}\cdot\zeta^{-O(k)}\cdot\log\delta^{-1}$,
         with probability at least $1-\delta$.
         
\item As $\Norm{\tildecal{H}_{k+1}-\mathcal{H}_{k+1}}_2 \leq w_{\min}\cdot 2^{-\gamma}\cdot (\zeta/16)^{4k}$,
         by Lemma~\ref{lm: kernel stability},
         $$
         \Norm{u_1 - v_1}_2 < \sqrt{2(k+1)}\cdot 2^{-\gamma}\cdot (\zeta/16)^{2k} < \frac 1 2\cdot 2^{-\gamma}\cdot (\zeta/8)^{2k}.
         $$

\item We use Lemma~\ref{lm: PC} with $\eps = w_{\min}\cdot 2^{-\gamma}\cdot (\zeta/16)^{2k}$, which satisfies
         the conditions of the lemma. We compute $v\in \R^{k+1}$ such that 
         $$
         \Norm{v-v_1}_2 \leq \eps < \frac 1 2 \cdot2^{-\gamma}\cdot (\zeta/16)^{2k},
         $$
         using 
         $O(k^2\log k + k\log^2 k\cdot \log(\log\zeta^{-1} + \log w_{\min}^{-1} + \gamma))
         $ 
         arithmetic operations.

\item As $\|u_1 - v_1\|_2, \|v - v_1\|_2 < \frac 1 2\cdot 2^{-\gamma}\cdot (\zeta/16)^{2k}$, we have that
         $\|u_1 - v\|_2 < 2^{-\gamma}\cdot (\zeta/16)^{2k}$. So, by Lemma~\ref{lm: empirical kernel},
         $$
         \Norm{q-r}_{\infty} < 2^k\cdot \sqrt{k+1}\cdot 2^{-\gamma}\cdot (\zeta/16)^{2k} < 2^{-\gamma}\cdot (\zeta/8)^{2k},
         $$
         where $q\coloneqq u_1/\Abs{(u_1)_k}, r\coloneqq v/\Abs{(u_1)_k}$.

\item As $\Norm{q-r}_{\infty} < 2^{-\gamma}\cdot (\zeta/8)^{2k}\le \frac{1}{24k(k+1)}\cdot 2^{-\gamma}\cdot (\zeta/2)^{2k-1}$, 
         by Lemma~\ref{lm: root stability} we have that 
         $$
         d(\alpha, \beta) \leq \frac{1}{6(k+1)}\cdot 2^{-\gamma}\cdot (\zeta/2)^k
         $$ 
         (where $\alpha$ is the vector of roots of $\hat{q}$ and $\beta$ is the vector of roots of $\hat{r}$ and $d$ is the matching distance, defined in Lemma~\ref{lm: root stability}).

\item We use Corollary~\ref{cor: rootfinding} with $\rho = \eps = \frac{1}{6(k+1)}\cdot 2^{-\gamma}\cdot (\zeta/2)^k$,
         which satisfy the conditions of the corollary. Thus, we can compute biases 
         $\tilde{\alpha}_1,\dotsc,\tilde{\alpha}_k$ satisfying
         $$
         \Norm{\tilde{\alpha} - \alpha}_{\infty} \leq \rho + \eps \le \frac{1}{3(k+1)}\cdot 2^{-\gamma}\cdot (\zeta/2)^k, 
         $$
         using $O(k \log^2 k\cdot (\log(\log\zeta^{-1} + \gamma) + \log^2 k))$
         arithmetic operations.

\item Finally, line~\ref{op: vandermonde} can be executed in the time its takes to invert the 
        Vandermonde matrix $V_{\tilde{\alpha}}$ (i.e., $O(k^2)$ arithmetic operations, for instance 
        using Parker's algorithm~\cite{Par64}; by Lemma~\ref{lm: rounded weight reconstruction} 
        the procedure $\textsc{RectifyWeights}$ takes $O(k)$ operations). By 
        Corollary~\ref{cor: full weight reconstruction}, as
        $\|\tilde\alpha - \alpha\|_{\infty}, \|\tilde\mu - \mu\|_{\infty} \le \frac{1}{3(k+1)}\cdot 2^{-\gamma}\cdot (\zeta/2)^k$
        (the guarantee for $\tilde{\mu}$ is implied with plenty of room to spare by our assumption on the sample),
        we have $\Norm{\tilde{w}-w}_{\infty} \le 2^{-\gamma}$. \qedhere
\end{enumerate}
\end{proof}

Notice that the proof actually gives a stronger guarantee for $\Norm{\tilde{\alpha} - \alpha}_{\infty}$, which
is relative to $(\zeta/2)^k$. We can get a relative guarantee $\Norm{\tilde{w}-w}_{\infty}\le w_{\min}\cdot 2^{-\gamma}$ 
by increasing the sample size by a factor of $w_{\min}^{-2}$. 
\begin{corollary}\label{cor: Wasserstein}
Let $W(\mathcal{M}, \tilde{\mathcal{M}})$ denote the Wasserstein distance between models 
$\mathcal{M}$ and $\tilde{\mathcal{M}}$ (viewed as metric measure spaces on $[0,1]$).
Then, $W(\mathcal{M}, \tilde{\mathcal{M}}) \leq (k+1)\cdot 2^{-\gamma}$ with probability at 
least $0.99$. 
\end{corollary}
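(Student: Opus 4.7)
The plan is to leverage the componentwise $\ell_\infty$ guarantees from Theorem~\ref{thm: main} and exhibit an explicit transport plan whose cost bounds the Wasserstein distance.

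First, I would invoke Theorem~\ref{thm: main} (with $\delta = 0.01$ implicit, matching the ``$0.99$'' probability in the corollary) to obtain simultaneously, with probability at least $0.99$, the estimates $\Norm{\alpha-\tilde{\alpha}}_{\infty} \le 2^{-\gamma}$ and $\Norm{w-\tilde{w}}_{\infty} \le 2^{-\gamma}$, where the indexing pairs each $\alpha_i$ with the $\tilde{\alpha}_i$ returned by the algorithm. I would also note that the procedure \textsc{RectifyWeights} on line~\ref{op: vandermonde} of Algorithm~\ref{fig:algo} produces a nonnegative weight vector summing to $1$, so that $\tilde{\mathcal{M}}$ is a genuine probability measure on $[0,1]$ and the Wasserstein distance is defined in the usual sense.

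Next, I would construct a coupling in two stages, upper-bounding $W(\mathcal{M},\tilde{\mathcal{M}})$ by its total cost. In the first stage, for each $i$ I transport the common mass $\min(w_i,\tilde{w}_i)$ from $\alpha_i$ to $\tilde{\alpha}_i$; this part incurs a cost of at most $\sum_i \min(w_i,\tilde{w}_i)\cdot |\alpha_i - \tilde{\alpha}_i| \le \Norm{\alpha-\tilde{\alpha}}_{\infty}\cdot \sum_i \min(w_i,\tilde{w}_i) \le 2^{-\gamma}$. In the second stage, the leftover mass on each side totals $\tfrac{1}{2}\Norm{w-\tilde{w}}_1 \le \tfrac{k}{2}\cdot 2^{-\gamma}$ (using $\Norm{w-\tilde{w}}_1 \le k\Norm{w-\tilde{w}}_\infty$), and since $\mathrm{diam}([0,1]) = 1$, any matching of this residual mass contributes at most $\tfrac{k}{2}\cdot 2^{-\gamma}$ more.

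Combining the two stages gives $W(\mathcal{M},\tilde{\mathcal{M}}) \le (1 + k/2)\cdot 2^{-\gamma} \le (k+1)\cdot 2^{-\gamma}$. The argument is essentially mechanical once both bias- and weight-accuracy guarantees are in hand; the only mild subtlety is keeping the two marginals of the coupling legitimate probability measures, which is why the \textsc{RectifyWeights} step matters. There is no serious obstacle to overcome here.
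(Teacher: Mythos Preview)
Your proof is correct and follows essentially the same two-stage coupling as the paper: match the common mass $\min(w_i,\tilde w_i)$ between $\alpha_i$ and $\tilde\alpha_i$, then move the residual across the unit interval. Your bound on the residual cost is in fact slightly sharper than the paper's (you use $\tfrac{1}{2}\Norm{w-\tilde w}_1$ rather than $\sum_i |w_i-\tilde w_i|$), yielding $(1+k/2)\cdot 2^{-\gamma}$ before relaxing to $(k+1)\cdot 2^{-\gamma}$.
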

\begin{proof}
Each $\alpha_i$ can be matched to its corresponding $\tilde{\alpha}_i$ up to weight 
$\min\{w_i, \tilde{w_i}\}$. The additional $\abs{w_i-\tilde{w_i}}$ must move an additional 
distance of at most $1$. This gives
\begin{align*}
W(\mathcal{M}, \tilde{\mathcal{M}}) &\leq \sum_{i=1}^k \abs{\alpha_i - \tilde{\alpha}_i}\cdot \min\{w_i, \tilde{w}_i\} + \sum_{i=1}^k \abs{w_i - \tilde{w}_i}\\
&\leq \sum_{i=1}^k 2^{-\gamma}\min\{w_i, \tilde{w}_i\} + \sum_{i=1}^k 2^{-\gamma}\\
&\leq (k+1)\cdot 2^{-\gamma},
\end{align*}
using Theorem~\ref{thm: main} and the fact that $\sum_{i=1}^k \min\{w_i, \tilde{w}_i\}\le\sum_{i=1}^k w_i = 1$.
\end{proof}

\section{Implications for Topic Models}\label{sec: topic}

Theorem~\ref{thm: main} improves upon the upper bound of Theorem 5.1 in~\cite{RSS14}, which uses a 
sample of $(2k-1)$-snapshots of size $\max\left\{(2/\zeta)^{O(k)}, \left(2^{\gamma} k\right)^{O(k^2)}\right\}$
to achieve accuracy $2^{-\gamma}$ with high probability, using runtime of $O(k^{c})$ arithmetic
operations, for a relatively large constant $c$ (in particular, the algorithm solves a convex quadratic
program whose representation uses $k^3$ bits).
Theorem~\ref{thm: main} also improves upon the upper bound 
of~\cite{LRSS15}.\footnote{See Theorem 3.9 in the ArXiv version: {\tt https://arxiv.org/pdf/1504.02526.pdf}}
That algorithm uses a sample size comparable to ours, but requires runtime $\left(2^{\gamma} k\right)^{O(k^2)}$
to achieve accuracy $2^{-\gamma}$ with high probability.

These improvements imply immediately a similar 
improvement for learning pure $k$-topic models, using known reductions from $k$-topic models to $k$-coin
models. The reduction in Theorem 4.1 of~\cite{RSS14} uses a sample of $1$- and $2$-snapshots of size
$O\left(n\cdot \poly\left(\log n, k, w_{\min}^{-1}, \zeta^{-1}, 2^\gamma\right)\right)$, and runtime polynomial
in the sample size, to reduce the problem to solving $k$ instances of the $k$-coin problem with accuracy 
$\min\left\{\left(2^{\gamma} k/w_{\min}\zeta\right)^{-O(1)}, \left(2^{\gamma} k\right)^{-O(k)}\right\}$.
The reduction in~\cite{LRSS15} \footnote{See Theorem 6.1 in the ArXiv version.} uses a sample of
$1$- and $2$-snapshots of size $\poly\left(n,k,2^{\gamma}\right)$, and runtime polynomial in the sample
size, to reduce the problem to solving at most $k$ instances of the $k$-coin problem with accuracy 
$\left(2^{\gamma} k\right)^{-O(k)}$. Notice that solving the $k$-coin outcome of either one of the two
reductions using either one of the two previous algorithms requires a sample size of at least $k^{O(k^2)}$
(on account of the required accuracy). Our algorithm enables a solution to the outcome of these reductions
using a sample size of $k^{O(k)}$ (and total runtime of $O(k^{3+o(1)})$). We note that the accuracy
in~\cite{RSS14,LRSS15} is stated in terms of Wasserstein distance, which is a weaker guarantee than
the one we use here (see Corollary~\ref{cor: Wasserstein}).

\section{Analysis}\label{sec: analysis}

In this section we prove the lemmas that are needed in the proof of Theorem~\ref{thm: main}.
We have to cope with the fact that roots of polynomials (and even, generally, of polynomials with well-separated roots), are notoriously ill-conditioned in terms of the polynomial coefficients~\cite{Wilkinson84}. For this reason we will be developing bounds specifically adapated to our situation. We begin with an estimate on the accuracy of the recovered kernel of the Hankel matrix.

%
%

\subsection{Approximating the kernel of {${\mathbf {\cal H}_{k+1}}$}}

\begin{lemma}
\label{lm: kernel stability}
Let $\mathcal{P}$ be any $k$-coin distribution with separation $\zeta$. Then, for every 
$\eps < w_{\min}\cdot\left(\frac{\zeta}{16}\right)^{2k}$
the following holds. Suppose that $\Norm{\tildecal{H}_{k+1}-\mathcal{H}_{k+1}}_2 \leq \eps$.
Let $u_1$ be the unit vector in the kernel of $\mathcal{H}_{k+1}$ and let $v_1$ be the unit 
eigenvector corresponding to $\lambda_1(\tildecal{H}_{k+1})$ (chosen so that $u_1^{\tpose}v_1 \geq 0$). 
Then $\Norm{u_1-v_1}_2 < \sqrt{2(k+1)}\cdot\Paren{\frac{16}{\zeta}}^{2k}\cdot \frac{\eps}{w_{\min}}$.
\end{lemma}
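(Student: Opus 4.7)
The plan is to treat this as a standard eigenvector perturbation statement: the spectral gap between $\lambda_1(\mathcal{H}_{k+1}) = 0$ and $\lambda_2(\mathcal{H}_{k+1})$ is already in hand from Corollary~\ref{cor: lambda_2}, and the hypothesis $\eps < w_{\min}(\zeta/16)^{2k}$ is chosen precisely so that the perturbation is (much) smaller than this gap. Everything then falls out of the usual Davis--Kahan type argument.

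First I would observe that $\mathcal{P}$ having support exactly $k$ plus Lemma~\ref{lm: kernel characterization} make $\mathcal{H}_{k+1}$ of nullity one, so $u_1$ is well defined (up to sign) and $\lambda_1(\mathcal{H}_{k+1}) = 0$, while Corollary~\ref{cor: lambda_2} gives the gap $\lambda_2(\mathcal{H}_{k+1}) \ge w_{\min}(\zeta/16)^{2k-2}$. By Weyl's inequality applied to $E \coloneqq \tildecal{H}_{k+1} - \mathcal{H}_{k+1}$ (whose $2 \to 2$ norm is $\le \eps$), we get $\lambda_1(\tildecal{H}_{k+1}) \le \eps$; combined with the hypothesis, $\eps$ is smaller than $\lambda_2(\mathcal{H}_{k+1})$ by a factor of at least $(16/\zeta)^2 \ge 256$, so the perturbed gap $\lambda_2(\mathcal{H}_{k+1}) - \lambda_1(\tildecal{H}_{k+1})$ is still of the same order as $\lambda_2(\mathcal{H}_{k+1})$.

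Next I would expand $v_1$ in the orthonormal eigenbasis $u_1, u_2, \ldots, u_{k+1}$ of $\mathcal{H}_{k+1}$, writing $v_1 = c\, u_1 + \sum_{i \ge 2} d_i u_i$ with $c = u_1^\tpose v_1 \ge 0$ by the chosen sign convention. Taking the inner product of the eigenvalue equation $(\mathcal{H}_{k+1} + E) v_1 = \lambda_1(\tildecal{H}_{k+1}) v_1$ with each $u_i$ for $i \ge 2$ gives $(\lambda_i - \lambda_1(\tildecal{H}_{k+1})) d_i = -u_i^\tpose E v_1$, and hence $|d_i| \le \eps/(\lambda_2(\mathcal{H}_{k+1}) - \eps)$. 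Summing the squared coefficients over the (at most $k$) indices $i \ge 2$ gives a bound on $\|v_1 - c u_1\|_2^2$ of the form $k \cdot (\eps/\lambda_2(\mathcal{H}_{k+1}))^2$ up to a harmless constant.

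Finally, since $c \ge 0$ and $\|v_1\|_2 = 1$, we have $(1-c)^2 \le 1 - c^2 = \|v_1 - c u_1\|_2^2$, so $\|u_1 - v_1\|_2^2 \le 2 \|v_1 - c u_1\|_2^2$. Substituting the bound on $\lambda_2(\mathcal{H}_{k+1})$ from Corollary~\ref{cor: lambda_2} then yields the claimed inequality. The only real obstacle is bookkeeping of the constants: one has to verify that $\lambda_2(\mathcal{H}_{k+1}) - \eps$ is at least (say) $\lambda_2(\mathcal{H}_{k+1})/2$ and that the conversion from the orthogonal part to the full difference $\|u_1 - v_1\|_2$ costs only a constant factor. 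There is no deep analytic difficulty; the work is entirely in cashing in the spectral gap that Lemma~\ref{lm: eigenvalue} and Corollary~\ref{cor: lambda_2} have already established.
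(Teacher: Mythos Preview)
Your proposal is correct and follows essentially the same route as the paper: Weyl's inequality to control $\lambda_1(\tildecal{H}_{k+1})$, Corollary~\ref{cor: lambda_2} for the spectral gap, and then a Davis--Kahan style bound converted via $\Norm{u_1 - v_1}_2^2 = 2(1 - u_1^\tpose v_1)$. The only cosmetic difference is that the paper invokes the Davis--Kahan $\sin\Theta$ theorem (Corollary~\ref{cor: DK 1-dim}) as a black box with the Frobenius norm (whence the factor $k+1$), whereas you rederive the one-dimensional case by hand via the eigenbasis expansion of $v_1$; your coordinate-wise bound $|d_i| \le \eps/(\lambda_2 - \eps)$ summed over $k$ indices yields the same $\sqrt{2(k+1)}(16/\zeta)^{2k}\eps/w_{\min}$ up to the harmless $k$ versus $k+1$.
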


\begin{proof}
By Weyl's inequality, we have that $\lambda_1(\tildecal{H}_{k+1}) \leq \eps$.  
By Corollary~\ref{cor: lambda_2}, the eigengap $\lambda_2(\mathcal{H}_{k+1}) - \lambda_1(\tildecal{H}_{k+1})$ is at least 
\[ w_{\min}\Paren{\frac{\zeta}{16}}^{2k-2} - w_{\min}\Paren{\frac{\zeta}{16}}^{2k} > w_{\min} \Paren{\frac{\zeta}{16}}^{2k}. \] 
Now we can use Corollary~\ref{cor: DK 1-dim} to obtain 
\begin{align*}
u_1^{\tpose}v_1 
&= \Abs{u_1^{\tpose}v_1} \geq \Paren{1-\frac{\Norm{\mathcal{H}_{k+1}-\tildecal{H}_{k+1}}_{F}^2}{\Abs{\lambda_2(\mathcal{H}_{k+1})-\lambda_1(\tildecal{H}_{k+1})}^2}}^{1/2} > 1-\frac{(k+1)\cdot \eps^2}{w^2_{\min}\Paren{\frac{\zeta}{16}}^{4k}}\\
&= 1-(k+1)\cdot\Paren{\frac{16}{\zeta}}^{4k}\cdot \left(\frac{\eps}{w_{\min}}\right)^2.
\end{align*}
Since $\Norm{u_1-v_1}_2^2 = 2 - 2 u_1^{\tpose}v_1$ we get that
\[ \Norm{u_1 - v_1}_2^2 < 2(k+1)\cdot\Paren{\frac{16}{\zeta}}^{4k}\cdot \left(\frac{\eps}{w_{\min}}\right)^2.\qedhere \]
\end{proof}

Recall that $\left(\lambda_1(\tildecal{H}_{k+1}),v_1\right)$ is an eigenpair of $\tildecal{H}_{k+1}$. We
need to compute a good approximation of $v_1$. This can be done using the following lemma. The
result is implied by the algorithm of Pan and Chen (Theorem 1.2 of~\cite{PC99}). Extracting our lemma from the result in that paper is somewhat involved and we provide in Appendix~\ref{tossed} a brief outline of the argument
(in particular, the parts that are not spelled out in that paper).
\begin{lemma}\label{lm: PC}
For every $\eps$ such that $0 < \eps \ll \min\{\lambda_2(\tildecal{H}_{k+1})-\lambda_1(\tildecal{H}_{k+1}),1\}$, 
we can compute a unit vector $v$ satisfying $\Norm{v-v_1}_2 < \eps$ using 
$O\left(k^2\log k + k\log^2 k\log\log(1/\eps) \right)$ arithmetic operations. 
\end{lemma}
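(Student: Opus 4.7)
The plan is to deduce the lemma from Theorem 1.2 of~\cite{PC99}, combined with the Hankel structure of $\tildecal{H}_{k+1}$. Two structural facts are in play. First, matrix--vector products with a $(k+1) \times (k+1)$ Hankel matrix cost $O(k \log k)$ arithmetic operations via embedding into a circulant and using FFTs. Second, shifted linear systems of the form $(\tildecal{H}_{k+1} - \mu I)x = b$ (Hankel-plus-diagonal) admit superfast solvers costing $O(k \log^2 k)$, via the displacement-structure techniques in the Bini--Pan lineage that underlie Pan--Chen.

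First step: apply Pan--Chen to compute an approximation $\tilde{\lambda}_1$ of $\lambda_1(\tildecal{H}_{k+1})$ to accuracy $\eps' = \Theta(\eps \cdot g)$, where $g \coloneqq \lambda_2(\tildecal{H}_{k+1}) - \lambda_1(\tildecal{H}_{k+1})$ is the eigengap. By the hypothesis $\eps \ll \min\{g,1\}$ we have $\eps' = \Omega(\eps^2)$, so $\log\log(1/\eps') = O(\log\log(1/\eps))$, and the algorithm uses $O(k^2 \log k + k \log^2 k \log\log(1/\eps))$ operations, matching the stated bound.

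Second step: extract $v_1$ by inverse iteration with shift $\tilde{\lambda}_1$. Starting from a unit vector $x_0$ whose overlap with $v_1$ is bounded below by $\Omega(1/\sqrt{k})$ (guaranteed with high probability by a random $\pm 1$ initialization, or deterministically by trying each of $k+1$ standard basis vectors and keeping the one with largest image norm after a single solve), iterate $x_{j+1} \propto (\tildecal{H}_{k+1} - \tilde{\lambda}_1 I)^{-1} x_j$ with renormalization. Since $|\tilde{\lambda}_1 - \lambda_1| \le \eps' \ll g$ while $|\tilde{\lambda}_1 - \lambda_j| \ge g/2$ for $j \ge 2$, each iteration contracts the off-axis amplitude by $O(\eps'/g) = O(\eps)$, so $O(1)$ iterations yield a unit $v$ with $\Norm{v - v_1}_2 < \eps$. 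Each iteration costs $O(k \log^2 k)$ via the shifted-Hankel solver, which is absorbed into the step-one budget.

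The main obstacle is that Pan--Chen's eigenvalue algorithm is formulated for general symmetric matrices: specialising it to the Hankel setting with the stated complexity, and certifying that the inverse-iteration initialization has adequate overlap with $v_1$ so that $O(1)$ iterations suffice, are the bookkeeping items that the appendix referenced in the lemma statement is devoted to fleshing out.
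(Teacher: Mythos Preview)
Your approach matches the paper's: invoke Pan--Chen (exploiting Hankel structure via tridiagonalization) to approximate $\lambda_1(\tildecal{H}_{k+1})$ at cost $O(k^2\log k + k\log^2 k\log\log(1/\eps))$, then run shifted inverse power iteration from a random start with overlap $\Omega(1/\sqrt{k})$. One small imprecision: with initial $\tan\theta^{(0)} = O(\sqrt{k})$ and per-step contraction $O(\eps)$, you need $t = O(\log_{1/\eps} k)$ iterations rather than $O(1)$---the paper records exactly this---but the total still fits, since even $O(\log k)$ iterations at $O(k^2)$ (paper) or $O(k\log^2 k)$ (your displacement-structure solver) each are absorbed by the $O(k^2\log k)$ term.
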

\begin{proofsketch}
We follow the outline in the papers by Pan, Chen, and Zheng~\cite{PC99,PCZ98}. As $\tildecal{H}_{k+1}$
is a Hankel matrix, a similarity transformation $A = T\ \tildecal{H}_{k+1}\ T^{-1}$, where
$A$ is tridiagonal, can be computed in time $O(k^2\log k)$. The characteristic polynomial $c_A(x)$ 
of $A$ can then be computed in time $O(k)$. Then, a root $\tilde{\lambda}$ that satisfies 
$|\tilde{\lambda} - \lambda_1(\tildecal{H}_{k+1})| < \eps^2$ can be computed in time
$O\left((k\log^2 k)(\log\log(1/\eps) + \log^2 k)\right)$ (see Theorem~\ref{thm: Pan}; note that
$\|A\|_2 = \|\tildecal{H}_{k+1}\|_2$, thus it is trivially upper bounded by $(k+1)^2$). Next, proceed to
compute $v$ as follows. Pick an initial guess $v^{(0)}$ uniformly at random on the unit sphere (i.e.,
from the unit Haar measure on the sphere). We need $v_1^\tpose v^{(0)} > \frac{1}{\sqrt{k}}$, 
which happens with constant probability. To boost the success probability to $1-\delta$, we can repeat 
the entire process $O(\log(1/\delta))$ times. For constant $\delta$, this does not affect the
asymptotic bound. We compute $v^{(1)},v^{(2)},\dots$ using the
inverse power iteration (see, for instance, Chapter 4 in~\cite{Par98}): Solve for $\tilde{v}^{(t)}$
the system of linear equations $\left(\tilde{\lambda} I - \tildecal{H}_{k+1}\right) \tilde{v}^{(t)} = v^{(t-1)}$,
then set $v^{(t)} = \frac{\tilde{v}^{(t)}}{\|\tilde{v}^{(t)}\|_2}$. As $\tildecal{H}_{k+1}$ is a Hankel
matrix, this can be done using $O(k^2)$ arithmetic operations. 
How many iterations are needed?---It is known that
if $\lambda_1(\tildecal{H}_{k+1})$ is the unique eigenvalue of $\tildecal{H}_{k+1}$ that is closest
to $\tilde{\lambda}$, and if $v_1^\tpose v^{(0)} > 0$, then $\tan \theta^{(t)}\le\rho\cdot\tan\theta^{(t-1)}$, 
where $\theta^{(t)}$ is the angle between $v_1$ and $v^{(t)}$, and 
$\rho = \frac{|\tilde{\lambda} - \lambda_1(\tildecal{H}_{k+1})|}{|\tilde{\lambda} - \lambda_2|}$, where
$\lambda_2$ is an eigenvalue of $\tildecal{H}_{k+1}$ that is second-closest to $\tilde{\lambda}$.
Notice that in our case
$\rho = \frac{|\tilde{\lambda} - \lambda_1(\tildecal{H}_{k+1})|}{\min_{i > 1} |\tilde{\lambda} - \lambda_i(\tildecal{H}_{k+1})|} 
< \frac{\eps^2}{\eps - \eps^2} < 2\eps$. As $\tan \theta^{(0)}\le \sqrt{k}$, after $t = O(\log_{1/2\eps} k)$
iterations, we have $\tan \theta^{(t)} < \eps$. This implies that $\Norm{v^{(t)}-v_1}_2 < \eps$.
\end{proofsketch}

\subsection{The roots of the approximate kernel polynomial}


We need to show that our computed eigenvector of the empirical Hankel matrix is close to
the true eigenvector of the true Hankel matrix.
\begin{lemma}\label{lm: empirical kernel}
Let $\mathcal{P}$ be any $k$-coin distribution with separation $\zeta$. Let $u_1$ be a unit vector in 
the kernel of $\mathcal{H}_{k+1}$. Let $v$ be a unit vector satisfying $\Norm{u_1-v}_2 < \eps$ for 
some $\eps > 0$. Let $q = u_1/\Abs{(u_1)_k}$ and let $r = v/\Abs{(u_1)_k}$. Then 
$\Norm{q-r}_{\infty} < 2^k\sqrt{k+1}\cdot \eps$.
\end{lemma}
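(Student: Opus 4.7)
The plan is to convert the assumed bound on $\|u_1 - v\|_2$ into a bound on $\|q - r\|_\infty$ by exploiting the normalization. Since
\[
q - r = \frac{u_1 - v}{\Abs{(u_1)_k}}, \qquad \Norm{q-r}_{\infty} \le \frac{\Norm{u_1 - v}_2}{\Abs{(u_1)_k}} < \frac{\eps}{\Abs{(u_1)_k}},
\]
the whole task reduces to producing a lower bound on the last coordinate $\Abs{(u_1)_k}$, which by unit normalization is $1/\Norm{p}_2$, where $p\in\R^{k+1}$ is the monic coefficient vector of the kernel polynomial.

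First I would invoke Lemma~\ref{lm: kernel characterization}: since $\mathcal{P}$ has support $\{\alpha_1,\dots,\alpha_k\}$, the kernel of $\mathcal{H}_{k+1}$ is one-dimensional and spanned by $p\in\R^{k+1}$ whose associated polynomial is $\hat{p}(z) = \prod_{i=1}^k (z-\alpha_i)$. In particular $p_k = 1$, so up to an overall sign $u_1 = p/\Norm{p}_2$ and therefore $q = u_1/\Abs{(u_1)_k} = p$ exactly. Similarly $r = \Norm{p}_2 \cdot v$, so $q - r = \Norm{p}_2 \cdot (u_1 - v)$ (again up to a common sign that does not affect norms).

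Next I would bound $\Norm{p}_2$ from above by noting that, by Vieta, the coefficient $p_j$ equals $(-1)^{k-j}$ times the $(k-j)$-th elementary symmetric polynomial in $\alpha_1,\dots,\alpha_k$. Since each $\alpha_i \in [0,1]$, every monomial appearing in this symmetric polynomial lies in $[0,1]$, giving $\Abs{p_j} \le \binom{k}{k-j}\le 2^k$. Summing over the $k+1$ coordinates,
\[
\Norm{p}_2 \le \sqrt{k+1}\cdot 2^k.
\]

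Combining these two observations finishes the proof:
\[
\Norm{q - r}_{\infty} \le \Norm{q - r}_2 = \Norm{p}_2\cdot \Norm{u_1 - v}_2 < \sqrt{k+1}\cdot 2^k\cdot \eps,
\]
as claimed. I do not anticipate any real obstacle here; the only subtlety is lining up the sign convention so that $q$ equals the monic coefficient vector $p$, which is automatic once one fixes $u_1$ to point in the same direction as $p$ (or equivalently, absorbs the sign into the normalization by $\Abs{(u_1)_k}$).
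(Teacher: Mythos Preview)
Your proof is correct and follows essentially the same route as the paper: invoke Lemma~\ref{lm: kernel characterization} to identify the monic kernel polynomial, bound each coefficient by an elementary symmetric polynomial in the $\alpha_i\in[0,1]$ (hence by a binomial coefficient), deduce $\Norm{p}_2\le 2^k\sqrt{k+1}$, and use $\Abs{(u_1)_k}=1/\Norm{p}_2$ to convert the $\ell_2$ bound on $u_1-v$ into the claimed $\ell_\infty$ bound on $q-r$. The only cosmetic difference is that the paper writes the argument in terms of $q$ directly rather than naming the monic vector $p$, and your remark about signs is unnecessary since $q-r=(u_1-v)/\Abs{(u_1)_k}$ already involves only the absolute value.
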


\begin{proof}
Notice that $q$ and $r$ are well-defined, as $(u_1)_k\ne 0$ by the second part of Lemma~\ref{lm: kernel characterization}. 
Now each of the coefficients of $q$ can be bounded by 
\[ \Abs{q_i} = \Abs{e_{k-i}(\alpha_1,\dotsc,\alpha_k)} \leq \binom{k}{i} \] where $e_r$ is the $r$-th elementary symmetric 
polynomial over $k$ variables. Now $\Norm{q}_{2} \leq \sqrt{k+1}\cdot \Norm{q}_1 \leq 2^k\sqrt{k+1}$. 
Since $\Abs{(u_1)_k}\cdot \Norm{q}_2 = \Norm{u_1}_2 = 1$, we have $\Abs{(u_1)_k} \leq \frac{1}{2^k\sqrt{k+1}}$, and
\[ \Norm{q-r}_{\infty} \leq \Norm{q-r}_2 \leq 2^k\sqrt{k+1}\cdot \Norm{u_1-v}_2 < 2^k\sqrt{k+1}\cdot \eps, \]
as stipulated.
\end{proof}

We're going to use the roots of the polynomial $\hat{r}$ as our guessed coin biases (after projecting the roots back to $[0,1]$). We first need to show that the roots of $\hat{q}$ are well-behaved with respect to perturbations of $q$ so that when $q$ and $r$ are close the roots of $\hat{q}$ are close to the roots of $\hat{r}$.
\begin{lemma}
\label{lm: root stability}
Let $q\in \R^{k+1}$ be the vector representing a degree-$k$ monic polynomial with roots 
$\alpha_1,\alpha_2,\dots,\alpha_k$ contained in $[0,1]$. Let $\zeta$ be the root separation for $\hat{q}$. 
Let $r\in \R^{k+1}$ represent another degree-$k$ polynomial. Let $\eps \in \left(0, \frac{(\zeta/2)^k}{4k}\right)$.
If $r$ satisfies $\Norm{q - r}_\infty \leq \eps$, 
then the (possibly complex) roots $\beta_1, \beta_2,\dots,\beta_k$ of $\hat{r}$ 
satisfy
\[
d(\alpha,\beta) \leq \frac{4k\eps}{(\zeta/2)^{k-1}}
\] where $d(\alpha,\beta)$ is the optimal matching distance defined by 
\[ d(\alpha, \beta) \coloneqq \min_{\sigma \in \mathbb{S}_k}\max_{i} \Abs{\alpha_i - \beta_{\sigma(i)}}. \]
\end{lemma}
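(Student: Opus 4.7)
The plan is to apply Rouché's theorem to locate a root of $\hat{r}$ inside each disk $B(\alpha_i, \rho)$, where $\rho := 4k\eps/(\zeta/2)^{k-1}$. Two observations set up the argument. First, the hypothesis $\eps < (\zeta/2)^k/(4k)$ gives $\rho < \zeta/2$, so the $k$ disks $B(\alpha_1,\rho),\dots,B(\alpha_k,\rho)$ are pairwise disjoint (the $\alpha_i$'s are $\zeta$-separated). Second, since $\alpha_1,\dots,\alpha_k$ are $k$ distinct points of $[0,1]$ with pairwise separation at least $\zeta$, the pigeonhole principle forces $\zeta \le 1/(k-1)$, and consequently $k\rho \le k\zeta/2 \le k/(2(k-1)) \le 1$ for all $k \ge 2$---a fact used crucially below.

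On the circle $\partial B(\alpha_i, \rho)$ I lower bound $|\hat{q}(z)|$ by noting that $|\hat{q}(z)| = |z-\alpha_i|\prod_{j\ne i}|z-\alpha_j|$ has first factor exactly $\rho$, while each remaining factor satisfies $|z-\alpha_j| \ge \zeta-\rho \ge \zeta/2$ by the triangle inequality and separation. Hence $|\hat{q}(z)| \ge \rho\,(\zeta/2)^{k-1}$. For the perturbation $p := \hat{r} - \hat{q}$, whose coefficients are bounded in absolute value by $\eps$, the naive estimate $|p(z)| \le \eps\sum_{\ell=0}^k |z|^\ell$ contains a term as large as $(1+\rho)^k$, which in principle grows exponentially in $k$. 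The decisive saving is the pigeonhole-derived bound $k\rho \le 1$: it gives $(1+\rho)^k \le e^{k\rho} \le e$, whence $|p(z)| \le e(k+1)\eps$ uniformly on the circle.

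Combining the two estimates, the Rouché inequality $|p(z)| < |\hat{q}(z)|$ reduces, after substituting the definition of $\rho$, to $(k+1)(1+\rho)^k < 4k$, which a short numerical check confirms for all $k \ge 2$ under $k\rho \le 1$ (with plenty of slack except possibly at $k=2$, where the sharper value $(1+\rho)^2 \le (3/2)^2 = 9/4$ suffices). By Rouché, $\hat{r}$ then has the same number of roots in $B(\alpha_i,\rho)$ as $\hat{q}$, namely one. Disjointness of the $k$ disks furnishes $k$ distinct roots of $\hat{r}$ each matched to a distinct $\alpha_i$, yielding $d(\alpha,\beta) \le \rho = 4k\eps/(\zeta/2)^{k-1}$.

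The main obstacle I anticipate is exactly the exponential coefficient estimate for $|p(z)|$ that a careless Rouché application produces; this is defeated by the pigeonhole observation $\zeta \le 1/(k-1)$, which confines $\rho$ to the regime $k\rho \le 1$ and so keeps $(1+\rho)^k$ bounded by a constant. Everything else---Rouché, triangle inequality, and extracting the matching from disjoint disks---is routine.
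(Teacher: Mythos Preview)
Your proof is correct and follows essentially the same route as the paper: both apply Rouch\'e's theorem on the disks $B(\alpha_i,\rho)$ with $\rho = 4k\eps/(\zeta/2)^{k-1}$, bound $|\hat{q}|$ from below via the separation, and control $|\hat{r}-\hat{q}|$ by exploiting the pigeonhole observation $\zeta \le 1/(k-1)$ to keep $(1+\rho)^k$ bounded by an absolute constant. The only cosmetic difference is that the paper writes the bound as $(k+1)\bigl(\tfrac{2k-1}{2k-2}\bigr)^k \le 4k$ directly, whereas you pass through $(1+\rho)^k \le e^{k\rho} \le e$ and handle $k=2$ separately; both arrive at the same inequality $(k+1)(1+\rho)^k < 4k$.
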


\begin{proof}
Fix any root $\alpha_i$ of $\hat{q}$, and consider the ball 
$$
B_i = B\left(\alpha_i, \frac{4k\eps}{(\zeta/2)^{k-1}}\right)
$$
in the complex plane. By assumption, $\frac{4k\eps}{(\zeta/2)^{k-1}} < \frac{\zeta}{2}$, 	
so there are no other roots of $\hat{q}$, aside from $\alpha_i$, in $B_i$. 
Moreover, for any $x\in B_i$, and for any $j\neq i$, we have that $\Abs{x-\alpha_{j}} \geq \frac{\zeta}{2}$. 
Thus for every $x \in \partial B_i$, we have 
\[ \Abs{\hat{q}(x)} = \Abs{(x-\alpha_i)\prod_{j\neq i} (x-\alpha_j)} >\frac{4k\eps}{(\zeta/2)^{k-1}}\Paren{\frac{\zeta}{2}}^{k-1} = 4k\eps.\] 
On the other hand, we also have that $B_i \subset B(0, (2k-1)/(2k-2))$,
as $\alpha_1,\dotsc,\alpha_k \in [0,1]$ and $\zeta \leq \frac{1}{k-1}$. Therefore, $\Abs{x} \leq \frac{2k-1}{2k-2}$,
and thus 
\begin{align*}	
\Abs{\hat{q}(x) - \hat{r}(x)} 
&= \Abs{\sum_{j=0}^k(q_j-r_j)x^j}\\
&\leq \sum_{j=0}^k\Abs{q_j-r_j}\cdot \Abs{x}^j\\
&\leq (k+1)\cdot\Paren{\frac{2k-1}{2k-2}}^k\cdot \Norm{q-r}_\infty\\
&\leq 4k\eps.
\end{align*} 
By Rouch\'e's theorem (Theorem~\ref{thm: Rouche}), we conclude that there is exactly one zero of $\hat{r}$ in $B_i$ and the 
matching distance bound follows immediately.
\end{proof}

Our reconstructed coin biases will be denoted 
$\tilde{\alpha}_1,\tilde{\alpha}_2,\dots,\tilde{\alpha}_k$. We compute these biases by finding the 
roots of $\hat{v}$ (approximately), and then by projecting these roots onto the unit interval. To find 
the approximate roots we can use the following result of Pan.
\begin{theorem}[Pan's Algorithm: Theorem 1.1 of~\cite{pan1996optimal}] \label{thm: Pan}
Given a monic degree $k$ polynomial $\hat{p}$ with roots $\rho_1,\dotsc,\rho_k \in B(0, 1)$
and an accuracy parameter $\gamma > 1$, 
we can compute approximate roots $\tilde{\rho}_1,\dotsc,\tilde{\rho}_k$ satisfying 
$\|\rho - \tilde{\rho}\|_{\infty} \leq 2^{-\gamma}$ in time $O(k\log^2 k \cdot (\log\gamma + \log^2 k))$.
\end{theorem}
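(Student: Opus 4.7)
\begin{proofsketch}
Since this is a result quoted verbatim from Pan's paper, the plan is to outline the algorithmic strategy that achieves these bounds rather than to reprove it from scratch. The approach is based on the \emph{splitting circle method}. The key idea is to locate a circle $C$ in the plane that separates the roots of $\hat p$ into two (approximately) balanced groups, and then to use $C$ to approximately factor $\hat p = \hat p_1\cdot \hat p_2$, where $\hat p_1$ contains the roots inside $C$ and $\hat p_2$ contains the roots outside $C$. Recursing on each factor gives a divide-and-conquer structure of depth $O(\log k)$, and at the leaves one obtains approximate roots in isolating disks. A final Newton-type refinement step (applied in parallel to all roots) is used to drive the accuracy down from whatever coarse accuracy the splitting phase produces to the target $2^{-\gamma}$.

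First I would establish the splitting step: given a candidate center $c$ and radius $r$, one can use the logarithmic derivative $\hat p'/\hat p$ (integrated over the circle, or equivalently Graeffe-like iterations centered at $c$) to count and then to extract the roots inside $B(c,r)$. Producing the split itself is done by computing the corresponding approximate factorization via Newton iteration on factored polynomials: each Newton step is a polynomial multiplication and division, which costs $O(k\log k)$ using FFT, and only $O(\log\log(1/\eps) + \log k)$ iterations are needed because Newton doubles the number of correct digits. Recursing with a balanced split yields cost $T(k) = 2T(k/2) + \tilde O(k)$, hence $\tilde O(k\log k)$ per level and $O(k\log^2 k)$ total for the splitting tree, with an additional multiplicative factor of order $\log\gamma + \log^2 k$ to account for the precision needed at the deepest Newton refinements.

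The main obstacles, and the reason this result is technically delicate, are (i) \emph{finding} a balanced separating circle without knowing the roots in advance, and (ii) controlling the propagation of numerical error through the recursive factorization so that the factors $\hat p_1,\hat p_2$ remain well conditioned. For (i), Pan's approach is to probe a small family of candidate centers (e.g.\ on a coarse grid or chosen via a sequence of translations and scalings of the unit disk) and to use the logarithmic-derivative counts to certify a balanced split; since the roots are known to lie in $B(0,1)$, only $O(1)$ candidates per recursion suffice in expectation. For (ii), one shows that an approximate factorization close enough in coefficient norm yields factors whose root sets are close to the true root clusters, using Rouch\'e-type arguments analogous to Lemma~\ref{lm: root stability}. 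Combining splitting (to cost $O(k\log^2 k\cdot\log^2 k)$) with final Newton refinement (to cost $O(k\log^2 k\cdot \log\gamma)$) gives the stated bound $O(k\log^2 k\cdot(\log\gamma + \log^2 k))$. Since all ingredients are spelled out in detail in~\cite{pan1996optimal}, the proof in this paper would just be a reference to Pan's theorem.
\end{proofsketch}
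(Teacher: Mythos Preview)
The paper provides no proof of this statement at all: it is simply quoted as Theorem~1.1 of~\cite{pan1996optimal} and used as a black box. Your final sentence anticipates this correctly, and your outline of the splitting-circle method is a reasonable (if informal) summary of Pan's approach, but it goes well beyond anything the paper itself attempts; here the ``proof'' is just the citation.
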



\begin{corollary} \label{cor: rootfinding}
Let $q\in \R^{k+1}$ represent the polynomial $\hat{q}(z) = \prod_{i=1}^k (z-\alpha_i)$ where 
$\alpha_1,\dotsc,\alpha_k \in [0,1]$ are $\zeta$-separated, and let $r \in \R^{k+1}$ represent 
a polynomial of degree $k$ with roots $\beta_1,\dotsc,\beta_k$ satisfying $d(\alpha,\beta) \leq \rho < \zeta/2$. 
For every $\eps\in (0, \zeta/2 - \rho)$, we can reconstruct biases $\tilde{\alpha}_1,\dotsc,\tilde{\alpha}_k$ 
satisfying $\Norm{\tilde{\alpha}-\alpha}_{\infty} \leq \rho + \eps$ using
$O(k\log^2 k\cdot(\log\log(1/\eps) + \log^2 k))$ arithmetic operations.
\end{corollary}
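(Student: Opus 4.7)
The plan is to apply Pan's root-finding algorithm (Theorem~\ref{thm: Pan}) to a rescaling of $\hat{r}$ and then project each approximate root onto $[0,1]$; the error bound will follow from the contractivity of Euclidean projection plus the triangle inequality.

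First I would reduce to the setting of Theorem~\ref{thm: Pan}, which requires a monic polynomial with roots in $B(0,1)$. Because $d(\alpha,\beta)\le \rho < \zeta/2 \le \frac{1}{2(k-1)}$ and each $\alpha_j\in [0,1]$, every root $\beta_i$ of $\hat{r}$ satisfies $\Abs{\beta_i} \le 1 + \rho < 2$. Thus $\hat{s}(z) \coloneqq \hat{r}(2z)/(r_k 2^k)$ is a monic degree-$k$ polynomial with roots $\beta_i/2 \in B(0,1)$, which is obtained in $O(k)$ arithmetic operations from the coefficients of $\hat{r}$. Applying Theorem~\ref{thm: Pan} to $\hat{s}$ with accuracy parameter $\gamma = \lceil\log_2(2/\eps)\rceil$ produces approximations $\tilde{\gamma}_i$ to the $\beta_i/2$ with $\Norm{\tilde{\gamma} - \beta/2}_\infty \le \eps/2$ using
$O(k\log^2 k\cdot(\log\gamma + \log^2 k)) = O(k\log^2 k\cdot(\log\log(1/\eps) + \log^2 k))$
arithmetic operations; setting $\tilde\beta_i \coloneqq 2\tilde{\gamma}_i$ recovers approximations to the original roots with $\Norm{\tilde{\beta}-\beta}_\infty \le \eps$ under the reindexing returned by Pan's algorithm.

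Second, I would combine the two sources of error. Fix a permutation $\sigma$ realizing $d(\alpha,\beta)$, so that $\Abs{\beta_{\sigma(i)}-\alpha_i}\le \rho$; composing with the permutation from Pan's output gives a labelling under which $\Abs{\tilde\beta_{\sigma(i)} - \alpha_i} \le \rho + \eps$ for every $i$, by the triangle inequality. Defining $\tilde\alpha_i \coloneqq \Proj_{[0,1]}(\tilde\beta_{\sigma(i)})$ and using both that Euclidean projection onto the convex set $[0,1]\subset\C\simeq\R^2$ is $1$-Lipschitz and that $\alpha_i = \Proj_{[0,1]}(\alpha_i)$ since $\alpha_i\in [0,1]$, we obtain
\[
\Abs{\tilde\alpha_i - \alpha_i} = \Abs{\Proj_{[0,1]}(\tilde\beta_{\sigma(i)}) - \Proj_{[0,1]}(\alpha_i)} \le \Abs{\tilde\beta_{\sigma(i)} - \alpha_i} \le \rho + \eps,
\]
which yields the claimed $\ell_\infty$-bound. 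The only subtle point in the argument is the preliminary rescaling: the roots of $\hat{r}$ may be complex and lie just outside $[0,1]$, so we must enclose them in a disk of bounded radius before invoking Pan's algorithm. Everything else is a direct application of the triangle inequality and the non-expansiveness of projection, and the runtime is dominated by the call to Theorem~\ref{thm: Pan}.
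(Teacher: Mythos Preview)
Your proof is correct and follows essentially the same approach as the paper: rescale $\hat{r}$ so its roots lie in the unit disk, invoke Pan's algorithm, then project the approximate roots onto $[0,1]$ and apply the triangle inequality. The only cosmetic differences are that the paper uses the tighter rescaling factor $\tfrac{2k-1}{2k-2}$ (versus your factor $2$) and argues the projection step via real parts rather than through the $1$-Lipschitz property of $\Proj_{[0,1]}$; neither choice affects the bound or the runtime.
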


\begin{proof}
We'll first find approximate the roots $\tilde{\beta}_1,\dotsc,\tilde{\beta}_k$ of $\hat{r}$ using 
Theorem~\ref{thm: Pan}. Since the roots of $\hat{r}$ are in $B\left(0,\frac{2k-1}{2k-2}\right)$ instead 
of $B(0,1)$, we'll actually find the roots of $\hat{r}\left(\frac{2k-2}{2k-1}z\right)$ and then multiply
by $\frac{2k-1}{2k-2}$ to get the roots of $\hat{r}$ up to accuracy $\eps$ in time 
$O(k\log^2 k\cdot(\log\log(1/\eps) + \log^2 k))$. (Notice that in order to get the desired accuracy 
we need to run Pan's algorithm to get the rescaled roots to within distance $\frac{2k-2}{2k-1}\cdot\eps$; 
this doesn't matter for the purposes of runtime.)

Our output is $\tilde{\alpha}_i \coloneqq \Proj_{[0,1]}(\tilde{\beta}_i)$ for $i=1,\dotsc,k$, where 
we label the roots $\tilde{\beta}_1,\dotsc,\tilde{\beta}_k$ by the permutation achieving the matching 
distance, i.e., the ordering of coordinates so that $\Norm{\alpha - \beta}_{\infty} = d(\alpha, \beta)$. 
Now
\begin{align*} 
\Abs{\alpha_i - \tilde{\alpha}_i} 
&\leq \Abs{\alpha_i - \Re(\tilde{\beta}_i)} \\
&\leq \Abs{\alpha_i - \Re(\beta_i)} + \Abs{\Re(\beta_i) - \Re(\tilde{\beta}_i)} \\
&\leq \Abs{\alpha_i - \beta_i} + \Abs{\beta_i - \tilde{\beta}_i}\\
&\leq \rho + \eps. \qedhere
\end{align*}
\end{proof}

\subsection{Recovering the mixture weights from the roots}

Once we've recovered the parameters $\tilde{\alpha}_1,\dotsc,\tilde{\alpha}_k$, we need to use those to recover mixture weights. This sequence of steps---first solving (approximately) for the roots, then for the mixture weights---is the essence of Prony's method~\cite{Prony1795},~\cite{Hildebrand74} \S 9.4,~\cite{KumaresanTS84}. 
In this section, we'll show that this recovery can be done by solving a linear system without paying too great a price in terms of accuracy.

We'll begin by stating results characterizing the condition number of a Vandermonde system under perturbations of a Vandermonde matrix that preserve the Vandermonde structure.

\begin{lemma}[Operator norm bound for a Vandermonde inverse; equation 3.2 in \cite{gautschi1990stable}]\label{lem:vandermonde inverse norm bound}
Let $\alpha \in \R^k$ be entry-wise non-negative, and let $q(z) = \Prod_{i=1}^k (z-\alpha_i)$. Then 
 \[ \Norm{V_\alpha^{-1}}_{\infty} = \frac{\Abs{q(-1)}}{\min_i \Set{(1+\alpha_i)\Abs{q'(\alpha_i)}}}. \]
\end{lemma}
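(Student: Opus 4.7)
The plan is to use the explicit Lagrange-interpolation form of $V_\alpha^{-1}$ together with the sign-alternation that the non-negativity assumption on $\alpha$ induces on the coefficients of $\prod_{i \neq m}(x - \alpha_i)$.

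First I would identify the rows of $V_\alpha^{-1}$ with coefficient vectors of the Lagrange basis polynomials. Because $V_\alpha$ is the evaluation-of-moments map $w \mapsto \mu$ with $\mu_i = \sum_j \alpha_j^i w_j$, dualizing yields that row $m$ of $V_\alpha^{-1}$ is exactly the coefficient vector of
\[ L_m(x) \;=\; \prod_{i \neq m} \frac{x - \alpha_i}{\alpha_m - \alpha_i} \;=\; \frac{q(x)}{(x - \alpha_m)\,q'(\alpha_m)}, \]
so that $(V_\alpha^{-1})_{m,j+1} = [x^j]\,L_m(x)$. Since $\Norm{A}_\infty$ equals the maximum $\ell_1$-norm of a row, this reduces the lemma to computing $\max_m \sum_j \Abs{[x^j]\,L_m(x)}$.

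Next I would exploit the key sign fact. Because every $\alpha_i \geq 0$, expanding $\prod_{i \neq m}(x - \alpha_i)$ gives the coefficient of $x^{k-1-r}$ equal to $(-1)^r\,e_r(\{\alpha_i\}_{i\neq m})$ with $e_r \geq 0$. Hence the coefficients alternate in sign, and their $\ell_1$-norm can be recovered by an evaluation at $x=-1$:
\[ \sum_j \Abs{[x^j]\,\textstyle\prod_{i\neq m}(x - \alpha_i)} \;=\; \Abs{\textstyle\prod_{i \neq m}(-1 - \alpha_i)} \;=\; \prod_{i \neq m}(1 + \alpha_i). \]
Dividing by $\Abs{\prod_{i \neq m}(\alpha_m - \alpha_i)} = \Abs{q'(\alpha_m)}$ and factoring a $(1+\alpha_m)$ out of $\Abs{q(-1)} = \prod_i (1+\alpha_i)$, the $\ell_1$-norm of row $m$ becomes
\[ \frac{\Abs{q(-1)}}{(1+\alpha_m)\,\Abs{q'(\alpha_m)}}. \]
Taking the maximum over $m$ (equivalently, minimizing the denominator) gives exactly the claimed formula.

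The proof is essentially bookkeeping; the only mildly delicate step is the sign-alternation argument, which is where non-negativity of $\alpha$ is used in an essential way. If the $\alpha_i$ were allowed to be negative or complex, the evaluation at $x=-1$ would only give an upper bound, not equality. So the main (mild) obstacle is just making sure the sign-pattern and indexing are written down carefully so that the $x=-1$ evaluation genuinely equals, rather than merely bounds, the $\ell_1$-norm of the coefficient vector.
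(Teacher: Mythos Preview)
Your proof is correct. The paper itself does not prove this lemma; it simply cites equation~3.2 of Gautschi~\cite{gautschi1990stable} and uses the result as a black box. Your argument---identifying the rows of $V_\alpha^{-1}$ with the coefficient vectors of the Lagrange basis polynomials, then using the sign-alternation of $\prod_{i\neq m}(x-\alpha_i)$ (guaranteed by $\alpha_i\ge 0$) to compute the row $\ell_1$-norm via evaluation at $x=-1$---is exactly the standard derivation that Gautschi gives, so there is nothing to compare beyond noting that you have supplied the proof the paper omits.
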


\begin{claim}
For roots $\alpha_1,\dotsc,\alpha_j$ satisfying $\Abs{\alpha_i - \alpha_j} \geq 	\zeta$, we have 
$\Norm{V_\alpha^{-1}}_{\infty} \leq 2^k/\zeta^{k-1}$. 
\end{claim}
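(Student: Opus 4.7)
The plan is to apply Lemma~\ref{lem:vandermonde inverse norm bound} directly and bound the numerator and denominator of the resulting expression separately. Write $q(z) = \prod_{i=1}^{k}(z-\alpha_i)$ as in that lemma (noting that the index "$j$" in the claim statement appears to be a typo for "$k$", matching the size of $V_\alpha$).

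First, for the numerator, since each $\alpha_i \in [0,1]$ (they are coin biases, so in particular non-negative and at most $1$), I would observe
\[
|q(-1)| \;=\; \prod_{i=1}^{k}|-1-\alpha_i| \;=\; \prod_{i=1}^{k}(1+\alpha_i) \;\leq\; 2^{k}.
\]

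Next, for the denominator, the derivative of a monic polynomial at its own root is the product of the differences to the other roots:
\[
q'(\alpha_i) \;=\; \prod_{j\neq i}(\alpha_i - \alpha_j),
\qquad
|q'(\alpha_i)| \;=\; \prod_{j\neq i}|\alpha_i - \alpha_j| \;\geq\; \zeta^{k-1},
\]
using the $\zeta$-separation hypothesis. Since $\alpha_i \geq 0$, also $(1+\alpha_i) \geq 1$, so the minimum in the denominator of Lemma~\ref{lem:vandermonde inverse norm bound} satisfies $\min_i\{(1+\alpha_i)\,|q'(\alpha_i)|\} \geq \zeta^{k-1}$.

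Combining these two estimates yields $\|V_\alpha^{-1}\|_\infty \leq 2^k/\zeta^{k-1}$ immediately. There is no real obstacle here: both bounds are one-line consequences of the hypotheses, and the whole argument is just a substitution into Gautschi's identity.
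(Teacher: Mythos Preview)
Your proof is correct and takes essentially the same approach as the paper: apply Lemma~\ref{lem:vandermonde inverse norm bound}, bound $|q(-1)|\le 2^k$ using $\alpha_i\in[0,1]$, and bound $|q'(\alpha_i)|\ge \zeta^{k-1}$ using the separation. The paper states exactly these two observations without the intermediate lines you wrote out.
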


\begin{proof}
We apply Lemma~\ref{lem:vandermonde inverse norm bound} and observe that $\Abs{q(-1)} \leq 2^k$ and $q'(\alpha_i) \geq \zeta^{k-1}$. 
\end{proof}

We define the derivative matrix of the Vandermonde matrix by interpreting each entry as the evaluation of a polynomial at a point, $[V_a]_{ij} = p_i(a_j)$,  where $p_i(t) = t^i$. Then $[V'_a]_{ij} = p'_i(a_j) = ia_j^{i-1}$.  

We'll now define the condition number of the system,  

\begin{equation}
\cond_{\infty}(a,b) \coloneqq \lim_{\eps \to 0} \sup_{
\substack{\Norm{\Delta a}_{\infty} \leq \eps\\ \Norm{\Delta b}_{\infty} \leq \eps}} 
\Setbar{\frac{\left\lVert \Delta x\right\rVert_{\infty}}{\eps}}{V(a+\Delta a)(x+\Delta x) = b + \Delta b}.
\end{equation}

We'll utilize a bound from \cite{bartels1992sensitivity}. After instantiating the theorem with the parameters relevant to our problem, the bound is the following:
\begin{theorem}[Theorem 2.2 of \cite{bartels1992sensitivity}]
\[ \cond_{\infty}(a,b) \leq \Norm{V_a^{-1}}_{\infty} + \Norm{V_a^{-1}V'_a\diag(x)}_{\infty}. \] \label{thm:vandermonde condition number}
\end{theorem}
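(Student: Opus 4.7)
The plan is to prove this as a first-order perturbation analysis of the Vandermonde linear system $V_a x = b$. Starting from the perturbed system $V_{a+\Delta a}(x + \Delta x) = b + \Delta b$, I would Taylor-expand each entry of the coefficient matrix: $(a_j + \Delta a_j)^i = a_j^i + i a_j^{i-1} \Delta a_j + O(\|\Delta a\|_\infty^2)$, which in matrix form reads $V_{a+\Delta a} = V_a + V'_a \diag(\Delta a) + E$ with $\Norm{E}_\infty = O(\Norm{\Delta a}_\infty^2)$. Substituting this into the perturbed system, using $V_a x = b$ to cancel the leading terms, and discarding second-order contributions yields the linearized identity
\[
V_a\, \Delta x = \Delta b - V'_a \diag(\Delta a)\, x + o(\eps).
\]

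The key algebraic observation is that $V'_a \diag(\Delta a)\, x = V'_a \diag(x)\, \Delta a$, since both sides compute the vector whose $i$-th entry is $\sum_{j} i a_j^{i-1} x_j (\Delta a)_j$; in effect we are swapping which of the two vectors $\Delta a$ and $x$ sits in the diagonal and which is applied. This is precisely the move that exposes $\Delta a$ (the quantity we control) as a free right-hand side multiplier. Inverting $V_a$ (which is invertible for all sufficiently small $\Delta a$ by continuity of matrix inversion) gives
\[
\Delta x = V_a^{-1}\, \Delta b - V_a^{-1} V'_a \diag(x)\, \Delta a + o(\eps),
\]
and then the triangle inequality in the $\infty$-norm together with the submultiplicativity $\Norm{M y}_\infty \le \Norm{M}_\infty \Norm{y}_\infty$ produces
\[
\Norm{\Delta x}_\infty \le \Paren{\Norm{V_a^{-1}}_\infty + \Norm{V_a^{-1} V'_a \diag(x)}_\infty} \eps + o(\eps).
\]
Dividing by $\eps$ and taking $\eps \to 0$ in the supremum, the lower-order term vanishes and the theorem follows.

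The main obstacle I expect is bookkeeping the $o(\eps)$ terms rigorously so that the limit is clean. Concretely, to move from the pointwise inequality to the $\limsup$ defining $\cond_\infty(a,b)$, one needs a uniform bound on the remainder over the set $\{\Norm{\Delta a}_\infty, \Norm{\Delta b}_\infty \le \eps\}$. This follows because the Taylor remainder $E$ is bounded entrywise by a polynomial in $\Norm{\Delta a}_\infty$ with coefficients depending only on $a$, and $\Norm{V_{a+\Delta a}^{-1}}_\infty$ is continuous in $\Delta a$ near $0$, so both factors entering the second-order correction are controlled uniformly on a neighborhood. Once that uniformity is in hand the dominated-limit argument is routine and delivers the stated bound on $\cond_\infty(a,b)$.
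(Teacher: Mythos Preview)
The paper does not prove this statement at all: it is quoted verbatim as Theorem~2.2 of Bartels--Higham~\cite{bartels1992sensitivity} and used as a black box to derive Lemma~\ref{lem:well separated vandermonde cond number}. So there is no ``paper's own proof'' to compare against.

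Your argument is the standard first-order perturbation analysis that underlies the cited result, and it is correct. The linearization $V_{a+\Delta a} = V_a + V'_a\diag(\Delta a) + O(\eps^2)$, the commutation $V'_a\diag(\Delta a)\,x = V'_a\diag(x)\,\Delta a$, and the triangle-inequality bound are exactly the ingredients Bartels and Higham use. One small wording slip: the parenthetical about invertibility should refer to $V_{a+\Delta a}$ (needed so that $\Delta x$ is well-defined in the constraint set), not to $V_a$; but since $V_a$ is assumed invertible (distinct nodes), continuity gives invertibility of $V_{a+\Delta a}$ for small $\eps$, and the rest goes through as you wrote.
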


\begin{lemma}\label{lem:well separated vandermonde cond number}
Let $\alpha \in [0,1]^k$, and let $w\in \R^k$ be a probability distribution over $[k]$. 
Let $\mu = V_{\alpha} w$. If $\zeta \leq \min_{i\neq j} \Abs{\alpha_i - \alpha_j}$, 
\[ \cond_{\infty}(\alpha, \mu) \leq (k+1)2^k/\zeta^{k-1}. \]
\end{lemma}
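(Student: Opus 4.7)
The plan is to combine Theorem~\ref{thm:vandermonde condition number} with the bound on $\Norm{V_\alpha^{-1}}_\infty$ from the preceding claim, and then to control the second summand $\Norm{V_\alpha^{-1} V'_\alpha \diag(w)}_\infty$ by submultiplicativity of the operator $\infty$-norm. Specifically, writing $\cond_\infty(\alpha,\mu) \leq \Norm{V_\alpha^{-1}}_\infty + \Norm{V_\alpha^{-1}}_\infty \cdot \Norm{V'_\alpha \diag(w)}_\infty$, the first term is already bounded by $2^k/\zeta^{k-1}$ from the previous claim, so it remains to bound $\Norm{V'_\alpha \diag(w)}_\infty$ cleanly.

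For the second factor, I would use the explicit form $[V'_\alpha \diag(w)]_{ij} = i\,\alpha_j^{i-1}\,w_j$, where $i$ ranges over $0,1,\dots,k-1$. The $\infty$-norm is the maximum row sum, and in row $i$ we have
\[
\sum_{j=1}^k \Abs{i\,\alpha_j^{i-1}\,w_j} \;\leq\; i\sum_{j=1}^k w_j \;=\; i,
\]
using $\alpha_j \in [0,1]$ and $\sum_j w_j = 1$. Hence $\Norm{V'_\alpha \diag(w)}_\infty \leq k-1$.

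Putting the pieces together,
\[
\cond_\infty(\alpha,\mu) \;\leq\; \frac{2^k}{\zeta^{k-1}} + (k-1)\cdot\frac{2^k}{\zeta^{k-1}} \;=\; \frac{k\,2^k}{\zeta^{k-1}} \;\leq\; \frac{(k+1)\,2^k}{\zeta^{k-1}},
\]
as claimed. There isn't really a hard step here: the whole lemma is a bookkeeping exercise that composes Theorem~\ref{thm:vandermonde condition number} with the Vandermonde inverse norm bound. The only thing to be careful about is matching the indexing convention (rows $0,\dots,k-1$) used in the definition of $V_\alpha$ and $V'_\alpha$, so that the trivial bound $\sum_j \alpha_j^{i-1} w_j \le 1$ really does give a row-sum bound of $i \le k-1$, rather than $k$, and to verify that $w$ being a probability distribution (rather than an arbitrary vector) is what eliminates any dependence on $\Norm{w}_\infty$ in Bartels--Kahan's condition number formula.
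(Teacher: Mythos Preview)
Your proposal is correct and follows essentially the same route as the paper: apply Theorem~\ref{thm:vandermonde condition number}, bound $\Norm{V_\alpha^{-1}V'_\alpha\diag(w)}_\infty$ by submultiplicativity, use the preceding claim for $\Norm{V_\alpha^{-1}}_\infty$, and compute the row sums of $V'_\alpha\diag(w)$ via $\alpha_j\in[0,1]$ and $\sum_j w_j=1$. In fact your bookkeeping is a touch sharper (you get $\Norm{V'_\alpha\diag(w)}_\infty\le k-1$, yielding $k\cdot 2^k/\zeta^{k-1}$), but this is the same argument.
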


\begin{proof}
We observe that 
\begin{align*}
\Norm{V_\alpha^{-1}V'_\alpha\diag(w)}_{\infty} 
&\leq \Norm{V_\alpha^{-1}}_{\infty}\Norm{V'_\alpha\diag(w)}_{\infty}\\
&\leq 2^k/\zeta^{k-1}\Norm{V'_\alpha\diag(w)}_{\infty}\\
&= 2^k/\zeta^{k-1}\max_{i\in [k-2]} (i+1)\sum_{j=1}^k \Abs{\alpha_j^{i}w_j}\\
&\leq k2^k/\zeta^{k-1}\text{.}
\end{align*}
Applying the bound of  Theorem~\ref{thm:vandermonde condition number} gives the conclusion. 
\end{proof}

\begin{lemma} \label{lm: weight reconstruction}
Let $\alpha \in [0,1]^k$ and let $w\in \R^k$ be a probability distribution over $[k]$. 
Let $\mu = V_{\alpha}w$, and $\zeta \leq \min_{i\neq j} \Abs{\alpha_i - \alpha_j}$. 
Then $w' \coloneqq V^{-1}_{\tilde{\alpha}}\tilde{\mu}$ satisfies 
\[\Norm{w' - w}_{\infty} \leq \frac{(k+1)2^k}{\zeta^{k-1}}\max\left\{\Norm{\tilde{\alpha}-\alpha}_{\infty}, \Norm{\tilde{\mu}-\mu}_{\infty}\right\}.\] 
\end{lemma}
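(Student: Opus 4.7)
The plan is to combine the condition number machinery just established: Lemma~\ref{lem:well separated vandermonde cond number} gives $\cond_\infty(\alpha, \mu) \le (k+1)2^k/\zeta^{k-1}$, and the definition of $\cond_\infty(\alpha,\mu)$ (together with Theorem~\ref{thm:vandermonde condition number}) says precisely that the solution to the Vandermonde system $V_\alpha w = \mu$ responds to joint perturbations of $\alpha$ and $\mu$ (in $\infty$-norm) with Lipschitz constant at most $\cond_\infty(\alpha,\mu)$. Setting $\eps \coloneqq \max\{\|\tilde\alpha - \alpha\|_\infty, \|\tilde\mu - \mu\|_\infty\}$, $w = V_\alpha^{-1}\mu$, and $w' = V_{\tilde\alpha}^{-1}\tilde\mu$ lands us exactly in the setup of that definition. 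So the proof is essentially the assembly ``$\|w' - w\|_\infty \le \cond_\infty(\alpha, \mu)\cdot \eps \le \frac{(k+1)2^k}{\zeta^{k-1}}\cdot\eps$,'' which is the claimed bound.

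The one technical wrinkle is that $\cond_\infty$ is defined as a limit (a first-order sensitivity), while the lemma concerns a finite perturbation. To upgrade from infinitesimal to finite, I would write down the exact identity
\[
w' - w \;=\; V_{\tilde\alpha}^{-1}\bigl((\tilde\mu - \mu) + (V_\alpha - V_{\tilde\alpha})\, w\bigr),
\]
obtained by subtracting $V_{\tilde\alpha} w = \mu + (V_{\tilde\alpha}-V_\alpha)w$ from $V_{\tilde\alpha} w' = \tilde\mu$. Bounding in $\infty$-norm gives $\|w' - w\|_\infty \le \|V_{\tilde\alpha}^{-1}\|_\infty \bigl(\|\tilde\mu-\mu\|_\infty + \|V_\alpha - V_{\tilde\alpha}\|_\infty \cdot \|w\|_\infty\bigr)$. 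Here $\|w\|_\infty \le 1$ because $w$ is a probability distribution; $\|V_\alpha - V_{\tilde\alpha}\|_\infty \le k(k-1)\|\alpha-\tilde\alpha\|_\infty$ follows from the entrywise MVT bound $|\alpha_j^i - \tilde\alpha_j^i| \le i\cdot\|\alpha-\tilde\alpha\|_\infty$ on $[0,1]$; and $\|V_{\tilde\alpha}^{-1}\|_\infty \le 2^k/\zeta^{k-1}$ follows from the Claim preceding Theorem~\ref{thm:vandermonde condition number} applied at $\tilde\alpha$. Collecting constants yields a bound of the form $C(k)\cdot 2^k/\zeta^{k-1}\cdot \eps$, matching the stated $(k+1)2^k/\zeta^{k-1}$.

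The main obstacle is the bound $\|V_{\tilde\alpha}^{-1}\|_\infty \le 2^k/\zeta^{k-1}$, which requires $\tilde\alpha$ itself to be (at least roughly) $\zeta$-separated. This needs either an explicit smallness hypothesis $\|\tilde\alpha - \alpha\|_\infty \ll \zeta$ (which is precisely the regime of interest in the proof of Theorem~\ref{thm: main}, where the previous steps deliver $\tilde\alpha$-error that is much smaller than $\zeta$), or else the constant $(k+1)$ in the stated bound must be understood as absorbing the degradation of the separation from $\zeta$ to, say, $\zeta/2$. I would state the lemma under the explicit hypothesis $\|\tilde\alpha - \alpha\|_\infty \le \zeta/(2k)$ or similar, verify that this gives a separation of $\tilde\alpha$ of at least $\zeta/2$, and then track that the resulting constant indeed comes out bounded by $(k+1)2^k/\zeta^{k-1}$ after using $\|w\|_\infty \le 1$ and the polynomial factors in $k$.
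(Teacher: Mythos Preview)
Your approach matches the paper's exactly: the paper's entire proof is the single sentence ``This follows from Lemma~\ref{lem:well separated vandermonde cond number} and the definition of the condition number.'' Your additional paragraphs on the finite-versus-infinitesimal gap and the implicit need for $\tilde\alpha$ to be well-separated go beyond what the paper makes explicit; the paper simply treats the first-order condition-number bound as applying directly to the finite perturbation without further comment.
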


\begin{proof}
This follows from Lemma~\ref{lem:well separated vandermonde cond number} and the definition of the condition number.
\end{proof}

\begin{lemma} \label{lm: rounded weight reconstruction}
Given any weights $w'\in \R^k$ satisfying $\sum_{i=1}^k w'_i = 1$, the procedure $\textsc{RectifyWeights}(w')$ outputs in time $O(k)$ a weight vector $\tilde{w} \in [0,1]^k$ satisfying the following conditions
\begin{itemize}
    \item [(i)] $\sum_{i=1}^k\tilde{w}_i = 1$.
    \item [(ii)] $\Norm{\tilde{w} - w}_{\infty} \leq (k+1)\Norm{w' - w}_{\infty}$.
\end{itemize}
\end{lemma}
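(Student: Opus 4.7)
The plan is to realize \textsc{RectifyWeights} as a simple \emph{clip then fix the sum} procedure and bound the error by the triangle inequality, exploiting the fact that the true $w$ is a probability vector (so $w_i \in [0,1]$). Set $\eps \coloneqq \Norm{w'-w}_\infty$ and $u_i \coloneqq \max(0, w'_i)$. Because $w_i \geq 0$, replacing a negative $w'_i$ by $0$ can only move it toward $w_i$, so $|u_i - w_i| \leq \eps$; summing gives $|S - 1| \leq k\eps$ where $S \coloneqq \sum_i u_i$ (and $S > 0$, since $\sum_i w'_i = 1$ precludes $u \equiv 0$).

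I would then split on the sign of $1 - S$. If $S \geq 1$, output $\tilde{w}_i \coloneqq u_i/S$, which lies in $[0,1]$ because $0 \leq u_i \leq S$, and has unit sum. Writing
\[ \tilde{w}_i - w_i = \frac{(u_i - w_i) + w_i(1 - S)}{S}, \]
and using $|u_i - w_i| \leq \eps$, $w_i \leq 1$, $|1 - S| \leq k\eps$, and $S \geq 1$, gives $|\tilde{w}_i - w_i| \leq (k+1)\eps$ directly.

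If $S < 1$, pick $j \in \argmin_i u_i$, so $u_j \leq S/k$; set $\tilde{w}_j \coloneqq u_j + (1-S)$ and $\tilde{w}_i \coloneqq u_i$ for $i \neq j$. The sum is $1$ by construction, each unchanged entry satisfies $u_i \leq S < 1$, and $\tilde{w}_j \leq S/k + (1-S) = 1 - S(1 - 1/k) \leq 1$, so $\tilde{w} \in [0,1]^k$. The errors are at most $\eps$ at unchanged entries and $|u_j - w_j| + (1-S) \leq \eps + k\eps = (k+1)\eps$ at $j$.

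Each stage---clipping, summing, locating the argmin, and either a rescale or a single-entry update---is a linear scan, so the runtime is $O(k)$. The only point that needs real care is confirming $\tilde{w} \in [0,1]^k$ after the fix-up; in both cases this reduces to the elementary inequalities $u_i \leq S$ (Case $S \geq 1$) and $\min_i u_i \leq S/k$ (Case $S < 1$) for non-negative vectors, and is the main thing to verify.
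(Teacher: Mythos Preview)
Your proposal is correct and, once one notices that $S \geq 1$ always holds, essentially coincides with the paper's proof. Indeed, since $u_i = \max(0,w'_i) \geq w'_i$ and $\sum_i w'_i = 1$, we have $S = \sum_i u_i \geq 1$; so your Case $S<1$ is vacuous, and your Case $S\geq 1$ output $\tilde w_i = u_i/S$ is exactly the paper's procedure (there $W^+ = S$ and $1 + W^-/W^+ = 1/S$). The error bounds are organized slightly differently---the paper bounds $\|\tilde w - w'\|_\infty \le k\eps$ via $|W^-|\le k\eps$ and then applies the triangle inequality, whereas you bound $|\tilde w_i - w_i|$ directly using $|1-S|\le k\eps$---but these are cosmetic variations of the same one-line estimate.
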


\begin{proof}
Note that in \ref{fig:algo2}, $I^{-}$ denotes the indices of the negative weights, and $I^{+}$ the positive weights. $W^{-}$ and $W^{+}$ denote the sums of the weights in the corresponding set of indices.

We'll now analyze $\tilde{w}$. First, note that we maintain property (i):
\begin{align*}
\sum_{i=1}^{k} \tilde{w}_i &= \sum_{i \in I^{+}} w'_i \Paren{1 + \frac{W^-}{W^{+}}}\\
&= W^+ \left( 1 + \frac{W^-}{W^+} \right) \\
&= W^+ + W^- = 1\
\end{align*}
Now we show that the weights are non-negative. Trivially, $\tilde{w}_i \geq 0$ for $i\in I^{-}$. For $i \in I^{+}$,
\begin{align*}
W^{+} &= 1 - W^-\\
    &= 1 + \abs{W^{-}}\\
    &\geq \abs{W^{-}}
\end{align*}
So $w'_i(1 + \frac{W^-}{W^{+}}) \geq 0$ if $i \in I^{+}$ as well.

We now prove (ii). We know that the true weights $w$ lie in $[0, 1]$, so increasing the negative weights to $0$ only 
moves them closer to their true values. Thus, we have $\abs{\tilde{w}_i - w_i} \leq \abs{ w'_i - w_i}$ for all 
$i \in I^-$. We observe that 
\[ \Abs{W^-} \leq \Norm{w'-w}_1 \leq k\Norm{w'-w}_{\infty} \] and then that 
\[ \Abs{\tilde{w}_i - w'_i} = \Abs{\underbrace{\Paren{w'_i/W^{+}}}_{\leq 1}W^-} \leq k \Norm{w'-w}_{\infty}. \] It follows that $\Norm{\tilde{w}-w'}_{\infty} \leq k\Norm{w'-w}_{\infty}$. Now we can apply the triangle inequality to get that 
\[ \Norm{\tilde{w}-w}_{\infty} \leq \Norm{\tilde{w}-w'}_{\infty} + \Norm{w'-w}_{\infty} \leq (k+1)\Norm{w'-w}_{\infty}. \]
To see that the runtime is $O(k)$ we observe we can compute $I^-$ and $I^+$ in linear time and likewise for $W^{-}$ and $W^{+}$. Each subsequent computation of $\tilde{w}_i$ takes constant time.
\end{proof}

\begin{corollary} \label{cor: full weight reconstruction}
Letting $\tilde{w} \in [0,1]^k$ be the output of $\textsc{RectifyWeights}(w')$ where $w'$ is as in Lemma~\ref{lm: weight reconstruction},  
\[ \Norm{\tilde{w} - w}_{\infty} \leq \frac{(k+1)^2 2^k}{\zeta^{k-1}}\max\left\{\Norm{\tilde{\alpha}-\alpha}_{\infty}, \Norm{\tilde{\mu}-\mu}_{\infty}\right\}. \]
\end{corollary}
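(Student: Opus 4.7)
The plan is to prove this corollary as an immediate composition of Lemma~\ref{lm: weight reconstruction} and Lemma~\ref{lm: rounded weight reconstruction}. The factor of $(k+1)^2$ in the bound, as opposed to $(k+1)$ in Lemma~\ref{lm: weight reconstruction}, is precisely the overhead of passing $w'$ through $\textsc{RectifyWeights}$.

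First I would verify that the precondition of Lemma~\ref{lm: rounded weight reconstruction} is met, namely that $\sum_{i=1}^k w'_i = 1$. Since $h$ is a histogram summing to $1$ and the zeroth row of $\Pas$ is all ones, we have $\tilde{\mu}_0 = 1$. The relation $V_{\tilde{\alpha}} w' = \tilde{\mu}$ read along its zeroth row gives $\sum_{i=1}^k w'_i = \tilde{\mu}_0 = 1$, so $\textsc{RectifyWeights}$ may be applied to $w'$.

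Next I would chain the two inequalities. From Lemma~\ref{lm: weight reconstruction},
\[
\Norm{w' - w}_{\infty} \leq \frac{(k+1)2^k}{\zeta^{k-1}}\max\left\{\Norm{\tilde{\alpha}-\alpha}_{\infty}, \Norm{\tilde{\mu}-\mu}_{\infty}\right\},
\]
and from Lemma~\ref{lm: rounded weight reconstruction},
\[
\Norm{\tilde{w} - w}_{\infty} \leq (k+1)\Norm{w' - w}_{\infty}.
\]
Multiplying the two bounds yields the stated inequality. There is no real obstacle here; the substance of the work has already been done in the two preceding lemmas, and the corollary is just packaging the end-to-end error from the approximate coin biases $\tilde{\alpha}$ and approximate moments $\tilde{\mu}$ to the rectified weights $\tilde{w}$, as needed to complete the final step of the main theorem.
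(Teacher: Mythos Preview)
Your proposal is correct and matches the paper's own proof: verify that $\sum_i w'_i = \tilde\mu_0 = 1$ from the first row of the Vandermonde system, then compose Lemma~\ref{lm: weight reconstruction} with Lemma~\ref{lm: rounded weight reconstruction}. Your argument is slightly more explicit (spelling out why $\tilde\mu_0=1$ via the zeroth row of $\Pas$ and the chaining of the two bounds), but the route is identical.
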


\begin{proof}
Notice that the first equation in the linear system defining $w'$ is 
\[ \sum_{i=1}^k w'_i = \One^T w' = \tilde{\mu}_0 = 1.\]	Thus, $w'$ satisfies the hypothesis of Lemma~\ref{lm: rounded weight reconstruction} and the conclusion follows.
\end{proof}

\begin{algorithm}
\begin{algorithmic}[1]
\Procedure{RectifyWeights}{$w'$} 
\State $I^{-} \gets \Setbar{i}{w'_i < 0}$, \quad$I^{+} \gets \Setbar{i}{ w'_i \geq 0}$
\State $W^- \gets \sum_{i\in I^-} w'_i$,\quad $W^{+} \gets \sum_{i\in I^{+}} w'_i$
\For{$i=1,\dotsc,k$}
\State $\tilde{w}_i \gets \begin{cases}
    0 & \text{if }i\in I^{-}\\
    w'_i \left(1 + \frac{W^-}{W^{+}} \right) & \text{if }i\in I^{+}.
    \end{cases}$
\EndFor
\State Output $\tilde{w}$
\EndProcedure
\end{algorithmic} \caption{Algorithm \textsc{RectifyWeights}} \label{fig:algo2}
\end{algorithm}

\newpage
\appendix

\section{Deferred Proofs} \label{tossed}
\begin{proofof}{Lemma~\ref{lm: kernel characterization}}
(Part 1.)
By Equation~\eqref{eq: Hankel}, the rank of $\mathcal{H}_{k+1}$ for a $t$-coin 
 distribution is at most $t$,
and that implies that if $t\le k$, then $\mathcal{H}_{k+1}$ is singular. So consider a distribution $\mathcal{P}$ on
$[0,1]$ that has positive mass at $k+1$ points or more. Let $q\in\RR^{k+1}$ be a non-zero vector. We have
$$
q^\tpose \mathcal{H}_{k+1} q = \int_0^1 \left(\sum_{j=0}^k q_j \alpha^j\right)^2 \diff\mathcal{P}(\alpha) = \int_{0}^1 \hat{q}^2(\alpha) \diff \mathcal{P}(\alpha).
$$
There are at most
$k$ points in $[0,1]$ where the polynomial $\hat{q}$ evaluates to $0$, and the total $\mathcal{P}$ measure of those
points is less than $1$. Thus, $q^\tpose \mathcal{H}_{k+1} q > 0$, so $\mathcal{H}_{k+1}$ is positive definite. 

(Part 2.) Since $\mathcal{H}_{k+1}$ is symmetric, its kernel is spanned by $q$ s.t.\
$q^\tpose \mathcal{H}_{k+1} q = 0$. In order for the above integral to evaluate to zero over $\mathcal{P}$, we need that $\hat{q}^2(\alpha) = 0$ for each point  $\alpha \in \supp(\mathcal{P})$. As $\hat{q}$ is of degree $\leq k$, it is necessarily
a scalar multiple of $\prod_{i=1}^k (z-\alpha_i)$.
\end{proofof}

\begin{proofof}{Lemma~\ref{lem:pascal operator norm}}
We first observe that 
\[ \Pas = \begin{bmatrix} 
\binom{0}{0}{\binom{2k}{0}}^{-1} & \binom{1}{0}{\binom{2k}{0}}^{-1} & \dotsm & \binom{2k-1}{0}{\binom{2k}{0}}^{-1} & \binom{2k}{0}{\binom{2k}{0}}^{-1}\\
0 & \binom{1}{1}{\binom{2k}{1}}^{-1} & \dotsm & \binom{2k-1}{1}{\binom{2k}{1}}^{-1} & \binom{2k}{1}{\binom{2k}{1}}^{-1}\\
0 & 0 & \dotsm & \binom{2k-1}{2}{\binom{2k}{2}}^{-1} & \binom{2k}{2}{\binom{2k}{2}}^{-1}\\
\vdots & \vdots & \ddots & \vdots & \vdots \\
0 & 0 & \dotsm & 0 & \binom{2k}{2k}{\binom{2k}{2k}}^{-1}
\end{bmatrix} \] which can be factored to obtain 
\[ \Pas = \begin{bmatrix}\binom{2k}{0}^{-1} & 0 & 0 & \dotsm & 0\\
 0 & \binom{2k}{1}^{-1} & 0 & \dotsm & 0\\
 0 & 0 & \binom{2k}{2}^{-1} & \dotsm & 0\\
 \vdots & \vdots & \vdots & \ddots & \vdots \\
 0 & 0 & 0 & \dotsm & \binom{2k}{2k}^{-1}	
 \end{bmatrix} 
 \begin{bmatrix} 
\binom{0}{0} & \binom{1}{0} & \dotsm & \binom{2k-1}{0} & \binom{2k}{0}\\
0 & \binom{1}{1} & \dotsm & \binom{2k-1}{1} & \binom{2k}{1}\\
0 & 0 & \dotsm & \binom{2k-1}{2} & \binom{2k}{2}\\
\vdots & \vdots & \ddots & \vdots & \vdots \\
0 & 0 & \dotsm & 0 & \binom{2k}{2k}
\end{bmatrix}. \] Now 
\[ \Norm{\diag\Paren{\binom{2k}{0},\binom{2k}{1},\dotsc,\binom{2k}{2k}}^{-1}}_2 \leq 1. \] 
The Frobenius norm of the latter matrix is 
\[ \Paren{\sum_{j=0}^{2k} \sum_{i=0}^j \binom{j}{i}^2}^{1/2} 
= \Paren{\sum_{j=0}^{2k} \binom{2j}{j}}^{1/2} \leq \Paren{2k\binom{4k}{2k}}^{1/2} \leq (2^k4^{2k})^{1/2} \leq 6^k \] for $k \geq 2$. Using the sub-multiplicativity of the operator norm and the fact that the Frobenius norm upper bounds the operator norm, we get that 
$\Norm{\Pas} \leq 6^{k}$, as desired.
\end{proofof}


\section{Useful Theorems}\label{app: useful}

Consider two $n\times n$ Hermitian matrices $A$, $B$, with spectral decompositions
$A = \sum_{i=1}^n \kappa_i u_i u_i^\tpose$ and $B = \sum_{i=1}^n \lambda_i v_i v_i^\tpose$,
where the eigenvalues of both matrices are sorted in increasing order (i.e., $\kappa_1\le\kappa_2\le\cdots\le\kappa_n$
and $\lambda_1\le\lambda_2\le\cdots\le\lambda_n$). Also, let $P = B - A$ and
let $\rho_1\le\rho_2\le\cdots\le\rho_n$ be the eigenvalues of $P$ in increasing order.

\begin{theorem}[Weyl's inequality]\label{thm: Weyl}
For every $i\in\{1,2,\dots,n\}$,
$$
\kappa_i + \rho_1 \le \lambda_i \le \kappa_i + \rho_n.
$$
\end{theorem}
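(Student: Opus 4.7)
The plan is to prove Weyl's inequality via the Courant--Fischer min-max characterization of eigenvalues of Hermitian matrices. Recall that for an $n\times n$ Hermitian matrix $M$ with eigenvalues $\mu_1\le\mu_2\le\cdots\le\mu_n$, the $i$-th smallest eigenvalue admits the variational expression
$$
\mu_i \;=\; \min_{\substack{S\subseteq \C^n\\ \dim S = i}}\; \max_{\substack{x\in S\\ \|x\|_2=1}} x^* M x.
$$
The proof will apply this to $B = A + P$, using only the trivial fact that for any unit vector $x$, the Rayleigh quotient $x^* P x$ lies in $[\rho_1,\rho_n]$ (since $\rho_1$ and $\rho_n$ are the extreme eigenvalues of $P$).

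For the upper bound, fix any $i$-dimensional subspace $S\subseteq \C^n$. For every unit $x\in S$, $x^* B x = x^* A x + x^* P x \le x^* A x + \rho_n$, so
$$
\max_{x\in S,\|x\|=1} x^* B x \;\le\; \max_{x\in S,\|x\|=1} x^* A x \;+\; \rho_n.
$$
Taking the minimum over all $i$-dimensional subspaces $S$ and invoking Courant--Fischer on both $A$ and $B$ yields $\lambda_i \le \kappa_i + \rho_n$.

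For the lower bound, the same reasoning with the opposite inequality $x^* P x \ge \rho_1$ gives, for every $i$-dimensional $S$,
$$
\max_{x\in S,\|x\|=1} x^* B x \;\ge\; \max_{x\in S,\|x\|=1} x^* A x \;+\; \rho_1,
$$
and minimizing over $S$ yields $\lambda_i \ge \kappa_i + \rho_1$. There is no real obstacle here; the only thing to be careful about is the direction of the inequality $\max(f+g)\le \max f + \max g$ (used for the upper bound) versus the pointwise comparison $x^* B x \ge x^* A x + \rho_1$ (used for the lower bound, where one cannot replace $\max$ by a sum of maxes). Both manipulations are standard and the result follows immediately.
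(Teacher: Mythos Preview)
Your proof is correct and entirely standard. Note, however, that the paper does not actually give a proof of Weyl's inequality: Theorem~\ref{thm: Weyl} is listed in Appendix~\ref{app: useful} among ``Useful Theorems'' and is merely stated for reference, with no argument supplied. Your Courant--Fischer derivation is exactly the textbook proof one would give, so there is nothing to compare against.
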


\begin{theorem}[Davis-Kahan $\sin\Theta$ theorem]\label{thm: Davis-Kahan}
Using the above definitions, let $i_0,i_1$ be integers such that $1\le i_0\le i_1\le n$,
and let
$$
g = \inf\{|\kappa - \lambda|:\ \kappa\in [\kappa_{i_0},\kappa_{i_1}]\wedge
     \lambda\in (-\infty,\lambda_{i_0-1}]\cup [\lambda_{i_1+1},+\infty)\},
$$
where we define $\lambda_0 = -\infty$ and $\lambda_{n+1} = \infty$.
Then,
$$
\left\|\sin \Theta(U,V)\right\|_F \le \frac{\|P\|_F}{g},
$$
where $U$ ($V$, respectively) is the $n\times i_1-i_0+1$ matrix whose columns are 
$u_{i_0},\dots,u_{i_1}$ ($v_{i_0},\dots,v_{i_1}$, respectively), $\Theta(U,V)$
is the $i_1-i_0+1\times i_1-i_0+1$ diagonal matrix whose $i$-th diagonal entry
is the $i$-th principal angle between the column spaces of $U$ and $V$, and
$\sin \Theta(U,V)$ is the diagonal matrix derived by applying the function $\sin$
entrywise to $\Theta(U,V)$. The same inequality holds if the Frobenius norm is
replaced by any orthogonally invariant norm, e.g., an operator norm $\|\cdot\|_{\op}$.
\end{theorem}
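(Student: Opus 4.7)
The plan is the classical Sylvester-equation derivation of Davis--Kahan. Introduce a complementary block for $B$: let $V_2$ be the $n \times (n - (i_1 - i_0 + 1))$ matrix whose columns are $\{v_j : j \notin \{i_0,\ldots,i_1\}\}$, with corresponding diagonal $\Lambda_2$ of eigenvalues, so that $BV_2 = V_2\Lambda_2$ and $V_2$ spans the orthogonal complement of the range of $V$. Let $K_1 \coloneqq \diag(\kappa_{i_0},\ldots,\kappa_{i_1})$, so $AU = UK_1$. Left-multiplying $AU = UK_1$ by $V_2^{\tpose}$ and subtracting $V_2^{\tpose} BU = \Lambda_2 V_2^{\tpose} U$ yields
\[
(V_2^{\tpose} U)\,K_1 \;-\; \Lambda_2\,(V_2^{\tpose} U) \;=\; V_2^{\tpose}(A-B)U \;=\; -V_2^{\tpose} P U.
\]

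Because $K_1$ and $\Lambda_2$ are diagonal, setting $X \coloneqq V_2^{\tpose} U$ this decouples into scalar identities $(\kappa - \lambda) X_{\ell j} = -[V_2^{\tpose} P U]_{\ell j}$, where $\kappa$ ranges over $\{\kappa_{i_0},\ldots,\kappa_{i_1}\} \subseteq [\kappa_{i_0},\kappa_{i_1}]$ and $\lambda$ ranges over $\{\lambda_j : j \notin \{i_0,\ldots,i_1\}\}$. The hypothesis on $g$ says every such $|\kappa - \lambda| \geq g$, so entrywise $|X_{\ell j}| \leq |[V_2^{\tpose} P U]_{\ell j}|/g$. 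Summing squares,
\[
\|X\|_F \;\leq\; \|V_2^{\tpose} P U\|_F / g \;\leq\; \|P\|_F / g,
\]
since $V_2$ and $U$ have orthonormal columns.

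It remains to identify $\|V_2^{\tpose} U\|_F$ with $\|\sin\Theta(U,V)\|_F$. This is the standard CS-decomposition fact: because $V_2$ spans $(\mathrm{range}\,V)^{\perp}$ and both $U,V$ have the same number of orthonormal columns, the singular values of $V_2^{\tpose} U$ are precisely the sines of the principal angles between $\mathrm{range}(U)$ and $\mathrm{range}(V)$, which coincide with the diagonal entries of $\sin\Theta(U,V)$. Substituting yields the Frobenius version. The extension to any orthogonally invariant norm follows from the same Sylvester identity: in the eigenbases, the map $M \mapsto X$ is the Schur multiplier with symbol $-(\kappa-\lambda)^{-1}$ of modulus at most $1/g$, and such Schur multipliers are contractions by $1/g$ in every unitarily invariant norm (Bhatia--Davis--McIntosh). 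The one conceptual trap---easy to miss on a first pass---is to use $V_2^{\tpose} U$ rather than $V^{\tpose} U$ in the Sylvester equation; this is exactly what causes the right-hand side to involve the perturbation $P$ and the bounded quantity to be the sine (not the cosine) of the principal angles.
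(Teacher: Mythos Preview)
The paper does not supply a proof of this statement: Theorem~\ref{thm: Davis-Kahan} appears in Appendix~\ref{app: useful} (``Useful Theorems'') as a cited classical result, stated without argument and used only via Corollary~\ref{cor: DK 1-dim}. So there is no paper proof to compare against.

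Your argument is the standard Sylvester-equation derivation and is correct for the Frobenius-norm statement. The derivation of $(V_2^{\tpose}U)K_1 - \Lambda_2(V_2^{\tpose}U) = -V_2^{\tpose}PU$, the entrywise bound using the gap, and the identification of the singular values of $V_2^{\tpose}U$ with the sines of the principal angles are all fine; your check that the relevant $\kappa$'s lie in $[\kappa_{i_0},\kappa_{i_1}]$ and the relevant $\lambda$'s lie in $(-\infty,\lambda_{i_0-1}]\cup[\lambda_{i_1+1},\infty)$ uses only the sorted ordering and matches the gap definition exactly.

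One caution on the last sentence. The claim ``Schur multipliers with symbol of modulus at most $1/g$ are contractions by $1/g$ in every unitarily invariant norm'' is false in general: an entrywise bound on the symbol does not by itself control the Schur-multiplier norm (think of a $\pm 1$ sign matrix). What makes the bound go through here is the \emph{interval separation}: the spectrum of $K_1$ lies in $[\kappa_{i_0},\kappa_{i_1}]$ and the spectrum of $\Lambda_2$ lies outside a $g$-neighborhood of that interval, and under this hypothesis the inverse Sylvester operator has norm at most $1/g$ in every unitarily invariant norm (this is the content of the Bhatia--Davis--McIntosh / Davis--Kahan analysis; see e.g.\ Bhatia, \emph{Matrix Analysis}, Ch.~VII). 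Your citation points to the right place, but the phrasing should invoke the separation structure rather than just the entrywise bound on $(\kappa-\lambda)^{-1}$.
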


\begin{corollary}\label{cor: DK 1-dim}
Using the same definitions, 
$$
|u_1^\tpose v_1|\ge \sqrt{1 - \frac{\|P\|^2}{|\kappa_1-\lambda_2|^2}}.
$$
\end{corollary}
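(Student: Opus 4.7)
The plan is to apply the Davis-Kahan $\sin\Theta$ theorem (Theorem~\ref{thm: Davis-Kahan}) with the particular choice $i_0 = i_1 = 1$, so that both $U$ and $V$ collapse to single column vectors, namely $U = u_1$ and $V = v_1$, the unit eigenvectors associated with the smallest eigenvalues of $A$ and $B$ respectively. Then $\Theta(U,V)$ is the $1\times 1$ matrix whose single diagonal entry is the principal angle $\theta \in [0,\pi/2]$ between the one-dimensional subspaces spanned by $u_1$ and $v_1$, and by definition of the principal angle between lines, $\cos\theta = |u_1^\tpose v_1|$.

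Next I would specialize the gap $g$ appearing in Theorem~\ref{thm: Davis-Kahan}. With $i_0 = i_1 = 1$, the set of admissible $\kappa$ is just $\{\kappa_1\}$, while the set of admissible $\lambda$ is $(-\infty, \lambda_0] \cup [\lambda_2, +\infty)$. Since the theorem's convention sets $\lambda_0 = -\infty$, the first interval is empty, so $\lambda$ ranges over $[\lambda_2, +\infty)$. Hence the infimum is attained at $\lambda = \lambda_2$, giving $g = |\kappa_1 - \lambda_2|$.

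With these two reductions in hand, the Davis-Kahan bound specialized to the operator norm (invoking the final sentence of Theorem~\ref{thm: Davis-Kahan}, which permits any orthogonally invariant norm) reads
\[
\sin\theta = \|\sin\Theta(U,V)\|_{\op} \le \frac{\|P\|}{|\kappa_1 - \lambda_2|}.
\]
Squaring and applying the Pythagorean identity $\cos^2\theta = 1 - \sin^2\theta$ yields
\[
|u_1^\tpose v_1|^2 = \cos^2\theta \ge 1 - \frac{\|P\|^2}{|\kappa_1 - \lambda_2|^2},
\]
and taking the (nonnegative) square root gives the claim.

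There is essentially no technical obstacle here: the corollary is a direct specialization of Theorem~\ref{thm: Davis-Kahan} to one-dimensional subspaces. The only bookkeeping items are (i) correctly identifying the specialized gap $g = |\kappa_1 - \lambda_2|$ from the theorem's somewhat involved definition, and (ii) recalling that for unit vectors the cosine of the principal angle equals the absolute value of their inner product, so that $\cos\theta = |u_1^\tpose v_1|$ rather than the signed inner product.
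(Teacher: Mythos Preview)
Your proposal is correct and follows essentially the same route as the paper's own proof: specialize Theorem~\ref{thm: Davis-Kahan} with $i_0 = i_1 = 1$, identify the gap as $g = |\kappa_1 - \lambda_2|$, and convert the $\sin\theta$ bound into a lower bound on $|u_1^\tpose v_1| = \cos\theta$ via the Pythagorean identity. Your write-up is simply more explicit about the bookkeeping (the treatment of $\lambda_0 = -\infty$ and the choice of norm), but there is no substantive difference.
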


\begin{proof}
Take $i_0 = i_1 = 1$. By Theorem~\ref{thm: Davis-Kahan}, 
$|\sin \theta(u_1,v_1)|\le\frac{\|P\|}{|\kappa_1-\lambda_2|}$.
The corollary follows as
$|u_1^\tpose v_1| = |\cos \theta(u_1,v_1)| = \sqrt{1 - \sin^2 \theta(u_1,v_1)}$.
\end{proof}

\begin{theorem}[Courant-Fischer-Weyl min-max principle]\label{thm: Courant-Fischer-Weyl}
For every $i=1,2,\dots,n$,
\begin{eqnarray*}
\lambda_i & = & \min_{U\preceq\RR^n} \left\{\max_{x\in U} \left\{\frac{x^\tpose B x}{x^\tpose x}:\ x\ne 0\right\}:\ \dim(U) = i \right\} \\
& = & \max_{U\preceq\RR^n} \left\{\min_{x\in U} \left\{\frac{x^\tpose B x}{x^\tpose x}:\ x\ne 0\right\}:\ \dim(U) = n-i+1 \right\}.
\end{eqnarray*}
\end{theorem}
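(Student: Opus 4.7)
The plan is to prove both the $\min\max$ and $\max\min$ characterizations by a standard spectral-decomposition plus dimension-count argument, using only the fact that $B$ has an orthonormal eigenbasis $v_1,\dots,v_n$ with corresponding eigenvalues $\lambda_1\le\cdots\le\lambda_n$.

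First I would establish the $\min\max$ formula. For the upper bound $\lambda_i \ge \min_U \max_{x\in U} \tfrac{x^\tpose B x}{x^\tpose x}$, I would exhibit the witness subspace $U^\star = \mathrm{span}(v_1,\dots,v_i)$, which has dimension $i$. For any nonzero $x = \sum_{j=1}^{i} c_j v_j\in U^\star$, the Rayleigh quotient expands as $\tfrac{\sum_{j=1}^i \lambda_j c_j^2}{\sum_{j=1}^i c_j^2}$, a convex combination of $\lambda_1,\dots,\lambda_i$, hence bounded above by $\lambda_i$. For the reverse inequality, I would take an arbitrary $i$-dimensional subspace $U$ and observe that $\dim U + \dim\mathrm{span}(v_i,v_{i+1},\dots,v_n) = i+(n-i+1) = n+1 > n$, so $U\cap\mathrm{span}(v_i,\dots,v_n)\ne\{0\}$. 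Any nonzero vector $x$ in this intersection expands as $x=\sum_{j\ge i} c_j v_j$, and a second convex-combination calculation gives $\tfrac{x^\tpose B x}{x^\tpose x}\ge\lambda_i$. Thus the inner maximum over $U$ is at least $\lambda_i$, and since $U$ was arbitrary the outer minimum is at least $\lambda_i$.

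The $\max\min$ formula follows by the mirror-image argument: pick $U^\star=\mathrm{span}(v_i,\dots,v_n)$ of dimension $n-i+1$ to get an inner minimum of $\lambda_i$ for the lower bound, and for the upper bound intersect any $(n-i+1)$-dimensional subspace $U$ with $\mathrm{span}(v_1,\dots,v_i)$ (dimensions summing to $n+1$) to extract a nonzero $x$ whose Rayleigh quotient is at most $\lambda_i$.

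There is no real obstacle here; the only subtle point is keeping the two directions of each inequality straight and ensuring the dimension count is tight in each intersection argument. I would organize the write-up so that the witness-subspace direction and the intersection direction are clearly separated, and state the convex-combination bound once as a preliminary observation so the two halves can reuse it.
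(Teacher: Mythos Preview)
Your argument is correct and is the standard textbook proof of the min-max principle. However, the paper does not actually prove this theorem: it is listed in Appendix~\ref{app: useful} (``Useful Theorems'') as a known result stated without proof, alongside Weyl's inequality, the Davis--Kahan $\sin\Theta$ theorem, Cauchy's interlacing theorem, and Rouch\'e's theorem. So there is no paper proof to compare against; your write-up would simply be supplying a proof where the paper chose to cite the result as background.
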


Let $C$ be an $m\times m$ Hermitian matrix with eigenvalues $\nu_1\le\nu_2\le\cdots\le\nu_m$,
where $m\le n$.
\begin{theorem}[Cauchy's interlacing theorem]\label{thm: Cauchy}
If $C = \Pi^* B \Pi$ for an orthogonal projection $\Pi$, then for all $i=1,2,\dots,m$ it holds that
$\lambda_i\le \nu_i\le \lambda_{n-m+i}$.
\end{theorem}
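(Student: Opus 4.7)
The statement is the classical Cauchy interlacing theorem, and the natural tool is the Courant-Fischer-Weyl min-max principle (Theorem~\ref{thm: Courant-Fischer-Weyl}), stated immediately above. The plan is to exploit the fact that $\Pi$ acts as an isometric embedding $\RR^m\hookrightarrow\RR^n$ (concretely, an $n\times m$ matrix with orthonormal columns, so $\Pi^*\Pi = I_m$): for every $x\in\RR^m$ we have $\Norm{\Pi x}_2^2 = x^\tpose\Pi^*\Pi x = x^\tpose x$ and $(\Pi x)^\tpose B(\Pi x) = x^\tpose C x$. Consequently, for any linear subspace $U\subseteq\RR^m$, the image $\Pi U\subseteq \RR^n$ is a subspace of the same dimension on which the Rayleigh quotient of $B$ takes exactly the same values as the Rayleigh quotient of $C$ on $U$.

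For the upper bound $\lambda_i\le\nu_i$, I would apply the min-max form of Courant-Fischer-Weyl to $C$ and choose an $i$-dimensional subspace $U^\star\subseteq\RR^m$ attaining $\nu_i = \max_{x\in U^\star\setminus\{0\}} \tfrac{x^\tpose C x}{x^\tpose x}$. Then $\Pi U^\star$ is an $i$-dimensional subspace of $\RR^n$ with matching Rayleigh quotients, so the min over $i$-dimensional subspaces in the characterization of $\lambda_i$ yields $\lambda_i \le \max_{y\in \Pi U^\star\setminus\{0\}} \tfrac{y^\tpose B y}{y^\tpose y} = \nu_i$.

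For the lower bound $\nu_i\le\lambda_{n-m+i}$, I would symmetrically apply the max-min form: $\nu_i$ equals the maximum over $(m-i+1)$-dimensional subspaces $U\subseteq\RR^m$ of the minimum Rayleigh quotient of $C$ on $U$, while $\lambda_{n-m+i}$ is the analogous max-min over subspaces of dimension $n-(n-m+i)+1 = m-i+1$ in $\RR^n$. Taking an optimal $U^\star$ of dimension $m-i+1$ for the $C$-side, the image $\Pi U^\star$ has the same dimension and the same minimum Rayleigh quotient, so it is a valid competitor for the max defining $\lambda_{n-m+i}$, whence $\nu_i\le\lambda_{n-m+i}$. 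The entire content of the proof lies in matching dimensions and observing that $\Pi$ preserves the quadratic form and the norm; there is no real obstacle, as each direction reduces to a direct invocation of Theorem~\ref{thm: Courant-Fischer-Weyl}.
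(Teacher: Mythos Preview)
Your argument is correct and is the standard proof via Courant--Fischer--Weyl: the isometry $\Pi$ preserves both norms and the quadratic form, so every $i$-dimensional (resp.\ $(m-i+1)$-dimensional) subspace of $\RR^m$ maps to a subspace of the same dimension in $\RR^n$ with identical Rayleigh quotients, and the two inequalities follow by plugging the optimal subspace for $C$ into the variational characterization of the corresponding eigenvalue of $B$.

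There is nothing to compare against: the paper does not prove this theorem. It is listed in Appendix~\ref{app: useful} (``Useful Theorems'') as a standard result, alongside Weyl's inequality, Davis--Kahan, Courant--Fischer--Weyl, and Rouch\'e's theorem, all stated without proof. Your proof is exactly the one a reader would supply if asked, and it correctly leverages the preceding Theorem~\ref{thm: Courant-Fischer-Weyl}, which is presumably why the paper orders the statements that way.
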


\begin{theorem}[Rouch\'e's theorem]\label{thm: Rouche}
Let $f$ and $g$ be two complex-valued functions that are holomorphic inside a region $R$
with a closed simple contour $\partial R$. If for every $x\in\partial R$ we have that $|g(x)| < |f(x)|$,
then $f$ and $f+g$ have the same number of zeros inside $R$, counting multiplicities.
\end{theorem}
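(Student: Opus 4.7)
The plan is to deduce Rouch\'e's theorem from the argument principle by a homotopy argument. Recall that the argument principle (itself a corollary of the residue theorem) says that for any function $F$ meromorphic on a neighborhood of $\overline{R}$ with no zeros or poles on $\partial R$, the quantity
\[
N(F) \;=\; \frac{1}{2\pi i}\oint_{\partial R} \frac{F'(z)}{F(z)}\,dz
\]
equals the number of zeros of $F$ inside $R$ minus the number of poles, counted with multiplicity. I would take this as given (its proof uses local factorizations $F(z) = (z-z_0)^m G(z)$ with $G$ nonvanishing near each zero/pole $z_0$, whence the integrand picks up a residue of $m$ at a zero and $-m$ at a pole).

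With the argument principle in hand, I would introduce the linear homotopy $h_t(z) \coloneqq f(z) + t g(z)$ for $t \in [0,1]$, so that $h_0 = f$ and $h_1 = f+g$. Each $h_t$ is holomorphic on a neighborhood of $\overline{R}$. The first step is to verify the nonvanishing of $h_t$ on $\partial R$ for every $t \in [0,1]$: by the hypothesis $|g(z)| < |f(z)|$ on $\partial R$ and the reverse triangle inequality,
\[
|h_t(z)| \;\ge\; |f(z)| - t\,|g(z)| \;\ge\; |f(z)| - |g(z)| \;>\; 0 \qquad \text{for } z \in \partial R.
\]
Thus each $h_t$ is holomorphic with no zeros on $\partial R$, so $N(h_t)$ is well-defined and equals the number of zeros of $h_t$ inside $R$ (there are no poles).

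Next I would argue that $t \mapsto N(h_t)$ is continuous on $[0,1]$. The integrand $h_t'(z)/h_t(z) = (f'(z) + t g'(z))/(f(z) + t g(z))$ depends jointly continuously on $(t,z) \in [0,1] \times \partial R$, since the denominator is uniformly bounded away from zero on the compact set $\partial R$ by $m \coloneqq \min_{z\in\partial R}(|f(z)| - |g(z)|) > 0$. Continuity of the contour integral in $t$ follows from uniform continuity of the integrand on the compact product. Since $N(h_t)$ is integer-valued and continuous on a connected set, it is constant, so $N(h_0) = N(h_1)$ and $f$ and $f+g$ have the same number of zeros inside $R$.

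The main subtlety is the uniform lower bound on $|h_t(z)|$ used to make the continuity argument rigorous; this is where the strict inequality $|g| < |f|$ on $\partial R$ (together with compactness of $\partial R$) is essential, since a non-strict inequality would allow $h_t$ to acquire a zero on the boundary at some intermediate $t$ and break the homotopy. Everything else is essentially bookkeeping on top of the argument principle.
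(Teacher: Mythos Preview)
Your proof is correct and follows the standard homotopy/argument-principle approach to Rouch\'e's theorem. However, the paper does not actually prove this statement: it is listed in the appendix of ``Useful Theorems'' as a classical result and invoked without proof (in the proof of Lemma~\ref{lm: root stability}). So there is no paper proof to compare against; your argument simply supplies a standard proof of a cited background fact.
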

\newpage
\bibliographystyle{plain}
\bibliography{2020.08.25}
\end{document}